\renewcommand{\eqref}[1]{(\ref{#1})}
\theoremstyle{plain}
\newtheorem{theorem}{Theorem}[section]
\newtheorem{proposition}[theorem]{Proposition}
\newtheorem{lemma}[theorem]{Lemma}
\theoremstyle{definition}
\theoremstyle{remark}
\newtheorem{remark}[theorem]{Remark}
\newcommand{\Q}{\mathbb{Q}}
\newcommand{\cin}{c_{\text{in}}}
\newcommand{\cout}{c_{\text{out}}}
\def\eqref#1{equation~\ref{#1}}
\def\1{\bm{1}}
\newcommand{\dd}{\mathrm{d}}
\newcommand{\X}{\mathbf{X}}
\newcommand{\E}{\mathbb{E}}
\renewcommand{\P}{\mathbb{P}}
\def\rd{{\textnormal{d}}}
\newcommand{\kop}{\mathcal{K}}
\newcommand{\tspace}{\mathcal{X}}
\DeclareMathAlphabet{\mathsfit}{\encodingdefault}{\sfdefault}{m}{sl}
\SetMathAlphabet{\mathsfit}{bold}{\encodingdefault}{\sfdefault}{bx}{n}
\newcommand{\R}{\mathbb{R}}
\renewcommand{\rd}{\rho}
\icmltitlerunning{LEAPS: A Discrete Neural Sampler}
\begin{document}

\twocolumn[
\icmltitle{LEAPS: A discrete neural sampler via locally equivariant networks}



\icmlsetsymbol{equal}{*}

\begin{icmlauthorlist}
\icmlauthor{Peter Holderrieth}{equal,mit}
\icmlauthor{Michael S. Albergo}{equal,harvard,iaifi}
\icmlauthor{Tommi Jaakkola}{mit}
\end{icmlauthorlist}

\icmlaffiliation{mit}{Massachusetts Institute of Technology}
\icmlaffiliation{harvard}{Society of Fellow, Harvard University}
\icmlaffiliation{iaifi}{Institute for Artificial Intelligence and Fundamental Interactions}

\icmlcorrespondingauthor{Peter Holderrieth}{phold@mit.edu}
\icmlcorrespondingauthor{Michael S. Albergo}{malbergo@fas.harvard.edu}

\icmlkeywords{Machine Learning, ICML}

\vskip 0.3in
]



\printAffiliationsAndNotice{\icmlEqualContribution} 

\begin{abstract}
    We propose \emph{LEAPS}, an algorithm to sample from discrete distributions known up to normalization by learning a rate matrix of a continuous-time Markov chain (CTMC). LEAPS can be seen as a continuous-time formulation of annealed importance sampling and sequential Monte Carlo methods, extended so that the variance of the importance weights is offset by the inclusion of the CTMC. 
    To derive these importance weights, we introduce a set of Radon-Nikodym derivatives of CTMCs over their path measures. Because the computation of these weights is intractable with standard neural network parameterizations of rate matrices, we devise a new compact representation for rate matrices via what we call \textit{locally equivariant} functions.  
    To parameterize them, we introduce a family of locally equivariant multilayer perceptrons, attention layers, and convolutional networks, and provide an approach to make deep networks that preserve the local equivariance.
    This property allows us to propose a scalable training algorithm for the rate matrix such that the variance of the importance weights associated to the CTMC are minimal. We demonstrate the efficacy of LEAPS on problems in statistical physics. We provide code in \url{https://github.com/malbergo/leaps/}.
\end{abstract}

\section{Introduction}
\label{sec:introduction}

A prevailing task across statistics and the sciences is to draw samples from a probability distribution whose probability density is known up to normalization. Solutions to this problem have applications in topics ranging across Bayesian uncertainty quantification \citep{gelfand1990}, capturing the molecular dynamics of chemical compounds \citep{berendsen1984, allen1987}, and computational approaches to statistical and quantum physics \citep{wilson1974, duane1987, faulkner2023sampling}.

\begin{figure}[t]
    \centering
    \includegraphics[width=0.75\linewidth]{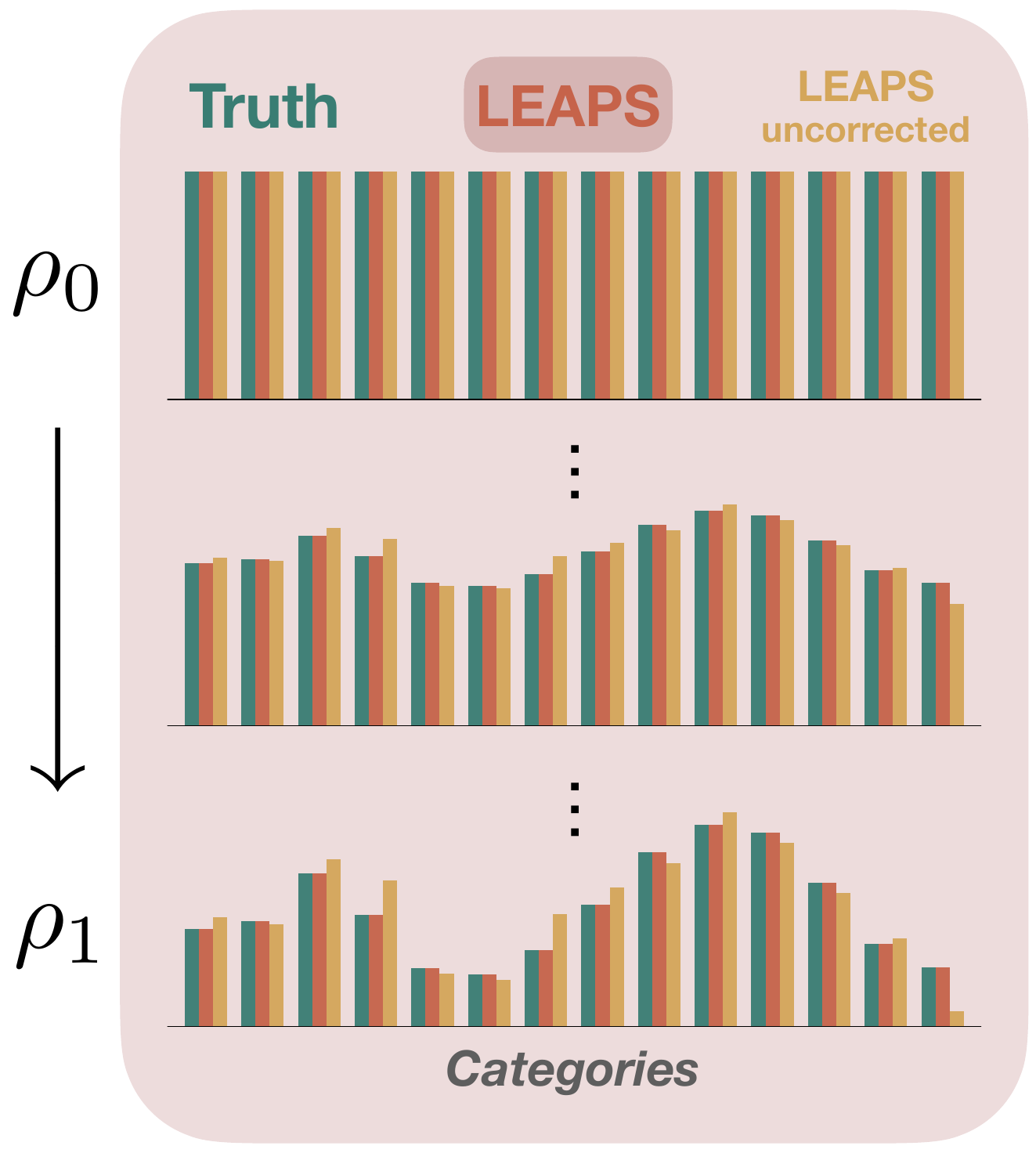}
    \caption{ Illustration of the LEAPS algorithm. LEAPS allows to learn a dynamical transport of discrete distributions from $t=0$ to $t=1$ (blue).
    Sample are generated via the simulation of a Continuous-time Markov chain (yellow). Further, importance sampling weights allow to correct training errors trading off sample efficiency (red).}
    \label{fig:enter-label}
\end{figure}
The most salient approach to such sampling problems is Markov chain Monte Carlo (MCMC) \cite{metropolis1953, robert1999monte}, in which a randomized process is simulated whose equilibrium is the distribution of interest. While powerful and widely applied, MCMC methods can be inefficient as they suffer from slow convergence times into equilibrium, especially for distributions exhibiting multi-modality. Therefore, MCMC is often combined with other techniques that rely on non-equilibrium dynamics, e.g. via annealing from a simpler distribution with annealed importance sampling (AIS) \cite{kahn1951, neal2001annealed} or sequential Monte Carlo methods (SMC) \cite{ doucet2001}. Even then, the variance of these importance weights may be untenably large, and making sampling algorithms more efficient remains an active area of research. Inspired by the rapid progress in deep learning,
there has been extensive interest in augmenting contemporary sampling algorithms with learning \cite{noe2019,albergo2019,gabrie2022,nicoli2020, matthews2022}, while still maintaining their statistical guarantees. 

Recently, there has been rapid progress in development of generative models using techniques from dynamical measure transport, i.e. where data from a base distribution is transformed into samples from the target distribution via flow or diffusion processes \cite{ho2020, song2020score, albergo2022building, albergo2023stochastic, lipman2023flow, liu2022flow}. More recently, these models could also be developed for discrete state spaces \citep{campbell2022continuous, gatdiscrete, shaul2024flow, campbell2024generative} and general state spaces and Markov processes \citep{holderrieth2024generator}.

While there have been various developments on adapting these non-equilibrium dynamics for sampling in \emph{continuous} state spaces \cite{zhang2022path, vargas2023denoising, mate2023learning, tian2024, albergo2024nets, richter2024improved, akhound2024iterated, sun2024}, there is a lack of existing literature on such sampling approaches for discrete distributions. However, discrete data are prevalent in various applications, such as in the study of spin models in statistical physics, protein and genomic data, and language. To this end, we provide a new solution to the discrete sampling problem via CTMCs. Our method is similar in spirit to the results in \cite{albergo2024nets, vargas2024transport} but takes the necessary theoretical and computational leaps to make these approaches possible for discrete distributions. Our \textbf{main contributions} are:

\begin{itemize}
    \item We introduce LEAPS, a non-equilibrium transport sampler for discrete distributions via CTMCs that combines annealed importance sampling and sequential Monte Carlo with learned measure transport.
    \item To define the importance weights, we derive a Radon-Nikodym derivative for reverse-time CTMCs, control of which minimizes the variance of these weights.
    \item We show that the measure transport can be learnt and the variance of the importance weights minimized by optimizing a physics-informed neural network (PINN) loss function.
    \item We make the computation of the PINN objective scalable by introducing the notion of a locally equivariant network. We show how to build locally equivariant versions of common neural network architectures, including attention and convolutions.
    \item  We experimentally verify the correctness and efficacy of the resulting LEAPS algorithm in high dimensions via simulation of the Ising model. 
\end{itemize}

\section{Setup and Assumptions}
In this work, we are interested in the problem of sampling from a \textbf{target distribution} $\rd_1$ on a \textbf{finite state space} $S$. We refer to $\rd_1$ by its probability mass function (pmf) given by
\begin{align}
    \rd_1(x) = \frac{1}{Z_1}\exp(-U_1(x))\quad (x\in S),
\end{align}
where we assume that we do not know the normalization constant $Z_1$ but only the function \textbf{potential} $U_1$. Our goal is to  produce samples $X\sim \rd_1$. To achieve this goal, it is common to construct a \textbf{time-dependent probability mass function (pmf)} $(\rd_t)_{0\leq t\leq 1}$  over $S$ which fulfils that $\rd_0$ has a distribution from which we can sample easily, e.g. $\rd_0=\text{Unif}_{S}$, and $\rho_1$ is our target of interest. We write $\rd_t$ as:
%
\begin{align}
    \rd_t(x)=&\frac{1}{Z_t}\exp(-U_t(x)),\\
    Z_t=&\sum\limits_{y\in S}\exp(-U_t(y)),  \quad F_t=-\log Z_t
\end{align}
where $Z_t$ (or equivalently $F_t$) is unknown. The value $F_t$ is also called the \textbf{free energy}. Throughout, we assume that $U_t$ is continuously differentiable in $t$. 
For example, we can set $U_t(x)=tU_1(x)$ so that $\rd_0=\text{Unif}_{S}$ and we get that $\rd_t\propto \exp(-tU_1(x))$ that can be considered a form of \emph{temperature annealing}.

\section{Sampling with CTMCs}
\label{sec:ctmc}
In this work, we seek to sample from $\rd_1$ using \textbf{continuous-time Markov chains (CTMCs)}. A CTMC $(X_t)_{0\leq t\leq 1}$ is given by a set of random variables $X_t\in S$ ($0\leq t\leq 1$) whose evolution is determined by a time-dependent \textbf{rate matrix} $Q_t(y,x)\in \mathbb{R}$ ($0\leq t\leq 1,x,y\in S$) which fulfills the conditions:
\begin{subequations}
\label{eq:q_t_cond}
\begin{align}
\label{eq:q_t_cond_1}
    Q_{t}(y;x)\geq& 0 \quad &\text{(for  }y\neq x)\\
\label{eq:q_t_cond_2}
    Q_{t}(x;x)=&-\sum\limits_{y\neq x}Q_t(y,x)\quad &\text{(for }x\in S)
\end{align}
\end{subequations}
The rate matrix $Q_t$ determines the \textbf{generator equation}
\begin{align}
\mathbb{P}[X_{t+h}=y|X_t=x] = \mathbf{1}_{x=y}+hQ_t(y,x)+o(h)
\end{align}
for all $x,y\in S$ and $h>0$ where $o(h)$ describes an error function such that $\lim\limits_{h\to 0}o(h)/h=0$. Because this equation describes the infinitesimal transition probabilities of the CTMC, we can sample from $X_t$ approximately via the following iterative Euler scheme:
\begin{align}
\label{eq:gen:eq:euler}
    X_{t+h} \sim \tilde{\mathbb{P}}[\cdot|X_t]:=(\mathbf{1}_{X_t=y}+hQ_t(y,X_t))_{y\in S}
\end{align}
where $\tilde{\mathbb{P}}[\cdot|X_t]$ describes a valid probability distribution for small $h>0$ by the conditions we imposed on $Q_t$ (see \cref{eq:q_t_cond}).

Our goal is to find a $Q_t$ that is a solution to the \textbf{Kolmogorov forward equation (KFE)}
\begin{align}
    \label{eq:kfe}
    \partial_t \rd_t(x) =& \sum\limits_{y\in S}Q_t(x,y)\rd_t(y), \quad \rho_{t=0} = \rho_0.
\end{align}
Fulfilling the KFE is a necessary and sufficient condition to ensure that the distribution of walkers initialized as $X_0 \sim \rho_0$ and evolving according to \cref{eq:gen:eq:euler} follow the prescribed path $\rho_t$, in particular such that $X_{t=1} \sim \rho_1$.



\begin{remark}
While the problem we focus on this work is sampling, the contributions of this work hold more generally for the problem of finding a CTMC that follows a prescribed time-varying density $\rd_t$, i.e. that the condition in \cref{eq:kfe} holds.
\end{remark}

\section{Proactive Importance Sampling}
In general, the CTMC $(X_t)_{0\leq t\leq 1}$ with arbitrary $Q_t$ will have different marginals than $\rd_t$. To still obtain an unbiased estimator, it is common to use \textbf{importance sampling (IS)} to reweight samples obtained while simulating $X_t$ or \textbf{sequential Monte Carlo (SMC)} \citep{doucet2001} to resample the walkers along the trajectory. Here, we introduce a time-evolving set of log-weights $A_t\in \mathbb{R}$ for $0\leq t\leq 1$ to re-weight the distribution of $X_t$ to a distribution $\mu_t$ defined such that for all $h:S\to\mathbb{R}$
\begin{align*}
\mathbb{E}_{x\sim \mu_t}[h(x)]=&\frac{\mathbb{E}[\exp(A_t)h(X_t)]}{\mathbb{E}[\exp(A_t)]}\\
\Leftrightarrow \quad \mu_t(x)=&\frac{\mathbb{E}[\exp(A_t)|X_t=x]}{\sum\limits_{y\in S}\mathbb{E}[\exp(A_t)|X_t=y]},
\end{align*}
where $\mathbb E[\cdot]$ denotes expectation over the process ($X_t, A_t$). Intuitively, the distribution $\mu_t$ is obtained by re-weighting samples from the current distribution of $X_t$. This effectively means that from a finite number of samples $(X_t^1,A_t^1),\dots,(X_t^n,A_t^n)$, we can obtain a Monte Carlo estimator via
\begin{align}
\label{eq:is_weights_mc_estimator}
    \mathbb{E}_{x\sim \mu_t}[h(x)]\approx\sum\limits_{i=1}^{n}\frac{\exp(A_t^i)}{\sum\limits_{j=1}^{n}\exp(A_t^j)}h(X_t^i)
\end{align}
Our goal is to find a scheme of computing $A_t$ such that its reweighted distribution coincides with the target densities $\rd_t$:
\begin{align}
\label{eq:reweighted_equals_target}
\mu_t = \rd_t \quad (0\leq t\leq 1)
\end{align}
In particular, this would mean
that \cref{eq:is_weights_mc_estimator} is a good approximation for large $n$.

\paragraph{Proactive importance sampling.} We next propose an IS scheme of computing weights $A_t$. Before we provide a formal derivation, we  provide a \emph{heuristic} derivation of our proposed scheme in the following paragraph. Intuitively, the log-weights $A_t$ should accumulate the deviation from the true distribution of $X_t$ to the desired distribution $\rd_t$. We can rephrase this as "accumulating the error of the KFE" that one may want to write as the difference between both sides of \cref{eq:kfe}:
\begin{align*}
    \partial_t\rd_t(x) - \sum\limits_{y\in S}Q_t(x,y)\rd_t(y)
\end{align*}
As we do not know the normalization constant $Z_t$, it is intuitive to divide by $\rd_t(x)$ to get
\begin{align*}
    \frac{\partial_t\rd_t(x)}{\rd_t(x)}-\sum\limits_{y\in S}Q_t(x,y)\frac{\rd_t(y)}{\rd_t(x)}
\end{align*}
Using that $\partial_t\rho_t(x)/\rho_t(x)=\partial_t\log 
\rd_t(x)=\partial_tF_t-\partial_t U_t(x)$, we obtain after dropping the unknown time-dependent constant $\partial_tF_t$:
\begin{align}
\label{eq:proactivate_operator}
    \kop_t\rd_t(x) = -\partial_tU_t(x) - \sum\limits_{y\in S}Q_t(x,y)\frac{\rd_t(y)}{\rd_t(x)}
\end{align}
where we defined a new operator $\kop_t\rd_t$. Intuitively, the operator $\kop_t$ measures the violation from the KFE in log-space (up to $F_t$) and it is intuitive to define $A_t$ as the accumulated error of that violation, i.e. as the integral
\begin{align}
\label{eq:A_t_definition}
A_t =& \int\limits_{0}^{t}\kop_s\rd_s(X_s)\dd s
\end{align}
To simulate $A_t$ alongside $X_t$, we use the approximate update role:
\begin{align*}
    A_{t+h} = A_{t}+h\kop_t\rd_t(X_t)
\end{align*}
We call this \textbf{proactive importance sampling (proactivate IS)} as the update operator $\mathcal{K}_t$ anticipates where $X_t$ is jumping to. We next provide a rigorous characterization of $A_t$ defined in this manner.



\section{Proactivate IS via Radon-Nikodym Derivatives}
A priori, it is not clear that the log-weights $A_t$ that we obtain via the proactive rule fulfil the desired condition in \cref{eq:reweighted_equals_target} (i.e. provide a valid IS scheme). Beyond showing this property, we show that there are many possible IS schemes but the proactive update rule is \emph{optimal} among a natural family of IS schemes. To do so, we present a set of Radon-Nikodym derivatives in path space.

Let $\tspace$ be a state space and $\P,\mathbb{Q}$ be two probability measures over $\tspace$. Then the \textbf{Radon-Nikodym derivative (RND)} $\frac{\dd\Q}{\dd \P}$ allows to express expected values of $\Q$ via expected values of $\P$. Specifically, the RND is a function $\frac{\dd\Q}{\dd \P}:\tspace\to\mathbb{R}$ such that for any (bounded, measurable) function $G:\tspace\to \mathbb{R}$ it holds that:
\begin{align*}
\mathbb{E}_{\X\sim \mathbb{Q}}[G(\X)] = \mathbb{E}_{\X\sim\mathbb{P}}\left[G(\X)\frac{\dd\Q}{\dd \P}(\X)\right]
\end{align*}

The state space that we are interested in is the space $\tspace$ of CTMC trajectories. Specifically, for a trajectory we denote with $X_{t^-}=\lim_{t'\uparrow t}X_{t'}$ the left limit and with $X_{t^+}=\lim_{t'\downarrow t}X_{t'}$ the right limit. The space $\tspace$ of CTMC trajectories is then defined as
\begin{align*}
    \tspace = \{X:[0,1]\to S| X_{t^-}\text{ exists and }X_{t^+}=X_{t}\},
\end{align*}
i.e. all trajectories that are continuous from the right with left limits. Such trajectories are commonly called  \textbf{c\`adl\`ag} trajectories. In other words, jumps (switches between states) happen if and only if $X_{t^-}\neq X_{t}$. We consider \emph{path distributions} (or \emph{path measures}), i.e. probability distributions over trajectories. For a CTMC $\X =(X_t)_{0\leq t\leq 1}$ with rate matrix $Q_t$ and initial distribution $\mu$, we denote the corresponding \textbf{path distribution} as
$\overrightarrow{\P}^{\mu, Q}$ where the arrow $\overrightarrow{\P}$ denotes that we go forward in time. Similarly, we denote with $\overleftarrow{\P}^{\nu, Q'}$ a CTMC running in reverse time initialized with $\nu$. We present the following proposition whose proof can be found in \cref{sec:proof_rnds}:
\begin{proposition}
\label{prop:rnd_forward_backward_ctmcs}
Let $\mu,\nu$ be two initial distributions over $S$. Let $Q_t,Q_t'$ be two rate matrices. Then the Radon-Nikodym derivative of the corresponding path distributions running in opposite time over the time interval $[0,t]$ is given by:
\begin{align*}
\log\frac{\dd\overleftarrow{\P}^{\nu, Q'}}{\dd\overrightarrow{\P}^{\mu, Q}}(\X) &= \log(\nu(X_t))-\log(\mu(X_0))\\
&+\int\limits_{0}^{t}Q_s'(X_s,X_s)-Q_s(X_s,X_s)\dd s\\
&+\sum\limits_{s,X_{s}^{-}\neq X_{s}}\log\left(\frac{Q_s'(X_{s}^-,X_s)}
{Q_s(X_{s},X_s^-)}\right)
\end{align*}
where we sum over all points where $X_s$ jumps in the last term.
\end{proposition}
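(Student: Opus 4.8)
The plan is to prove the identity by time-discretization: I approximate both path measures by their finite-dimensional marginals on a grid $0 = t_0 < t_1 < \cdots < t_N = t$ with mesh $h = t/N$, compute the elementary likelihood ratio there, and then pass to the limit $h \to 0$. Using the Euler transition kernel from \cref{eq:gen:eq:euler}, the forward measure assigns to a discretized path $(x_0, \ldots, x_N)$ the weight $\mu(x_0) \prod_{k=0}^{N-1}[\mathbf{1}_{x_k = x_{k+1}} + h Q_{t_k}(x_{k+1}, x_k)]$, while the reverse measure, which is initialized at the terminal time with $\nu$ and propagated backward, assigns $\nu(x_N) \prod_{k=0}^{N-1}[\mathbf{1}_{x_k = x_{k+1}} + h Q'_{t_{k+1}}(x_k, x_{k+1})]$. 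The crucial bookkeeping is that a step $x_k \to x_{k+1}$ that looks like a jump from $X_{s^-}$ to $X_s$ in forward time is traversed in the opposite orientation by the reverse chain, which is exactly why $Q'$ appears with its arguments swapped relative to $Q$.

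Taking the logarithm of the ratio of these two products turns it into $\log\nu(x_N) - \log\mu(x_0)$ plus a sum over grid cells of $\log\frac{\mathbf{1}_{x_k = x_{k+1}} + h Q'_{t_{k+1}}(x_k, x_{k+1})}{\mathbf{1}_{x_k = x_{k+1}} + h Q_{t_k}(x_{k+1}, x_k)}$. I then split the cells into those with $x_k = x_{k+1}$ and those with $x_k \neq x_{k+1}$. On a no-jump cell the summand is $\log(1 + h Q'_{t_{k+1}}(x_k,x_k)) - \log(1 + h Q_{t_k}(x_k,x_k)) = h[Q'_{t_{k+1}}(x_k,x_k) - Q_{t_k}(x_k,x_k)] + O(h^2)$, and since $X$ is constant on all but finitely many cells as $h \to 0$, the Riemann sum of these converges to $\int_0^t Q'_s(X_s,X_s) - Q_s(X_s,X_s)\,\dd s$. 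On a jump cell the indicator vanishes and the factors of $h$ cancel, leaving exactly $\log\frac{Q'_{t_{k+1}}(x_k, x_{k+1})}{Q_{t_k}(x_{k+1}, x_k)}$, which converges to $\log\frac{Q'_s(X_{s^-}, X_s)}{Q_s(X_s, X_{s^-})}$ at each jump time $s$. Summing over the finitely many jumps produces the last term, and assembling the three pieces yields the claimed formula.

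The step I expect to be the main obstacle is making this passage to the limit rigorous rather than merely formal: one must argue that the finite-dimensional likelihood ratios, which are genuine Radon-Nikodym derivatives of the marginal laws, converge ($\overrightarrow{\P}^{\mu,Q}$-almost surely and in $L^1$) to a limit that is the Radon-Nikodym derivative of the full path measures, not only of their projections. Because $S$ is finite, the rates $Q_s, Q'_s$ are uniformly bounded, so a c\`adl\`ag trajectory has only finitely many jumps almost surely and the jump sum is well defined; this boundedness also supplies the uniform integrability needed to upgrade convergence of the finite-dimensional densities to convergence of the measures via a martingale-convergence argument on the filtration generated by the refining grids. I also need a mild nondegeneracy condition for absolute continuity, namely that $Q'_s(y,x) > 0$ whenever $Q_s(x,y) > 0$ along the path and that $\nu$ is supported compatibly with $\mu$, so that every ratio inside the logarithms stays finite. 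An alternative that sidesteps the discretization bookkeeping is to invoke the Girsanov theorem for pure-jump Markov processes to obtain the forward-to-forward density $\dd\overrightarrow{\P}^{\mu,Q'}/\dd\overrightarrow{\P}^{\mu,Q}$ and then compose it with the reverse-time change of measure, but the direct discretization above is the most self-contained route.
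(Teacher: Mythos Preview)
Your proposal is correct and follows essentially the same route as the paper's own proof: discretize time on a uniform mesh, write each path measure as an initial weight times a product of one-step transition kernels $\mathbf{1}_{y=x}+hQ(y,x)$, take the ratio, split the resulting sum into no-jump cells (which produce the Riemann sum for the diagonal integral) and jump cells (where the $h$'s cancel to leave the jump ratios), and pass to the limit. The paper frames this through the dual characterization $\mathbb{E}_{\overrightarrow{\P}}[\Phi]=\mathbb{E}_{\overleftarrow{\P}}[\Phi\cdot\tfrac{\dd\overrightarrow{\P}}{\dd\overleftarrow{\P}}]$ for test functions $\Phi$ rather than writing the density ratio directly, but the computation is identical; if anything, your discussion of the $L^1$/martingale-convergence step and the absolute-continuity conditions is more careful than the paper's formal limit exchange.
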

Let us now revisit our goal of finding an IS scheme to sample from the target distribution $\rd_1$. The key idea is to construct a CTMC running in reverse-time with initial distribution $\rd_t$ and then use the RND from \cref{prop:rnd_forward_backward_ctmcs}. For a function $h:S\to\R$, we can then express its expectation under $\rd_t$ as:
\begin{equation}
\begin{aligned}
	\mathbb{E}_{x\sim \rd_t}[h(x)] &= \mathbb{E}_{\X\sim \overleftarrow{\P}^{\rd_t, Q'}}[h(X_t)]\\
	&=\mathbb{E}_{\X\sim \overrightarrow{\P}^{\rd_0, Q}}\left[h(X_t)\frac{\dd\overleftarrow{\P}^{\rd_t, Q'}}{\dd\overrightarrow{\P}^{\rd_0, Q}}(\X)\right]
\end{aligned}
\end{equation}
i.e. the RND $\frac{\dd\overleftarrow{\P}^{\rd_1, Q'}}{\dd\overrightarrow{\P}^{\rd_0, Q}}(\X)$ gives a valid set of importance weights. Note that this holds for \emph{arbitrary} $Q_t'$.\\

However, to sample efficiently, it is crucial that the IS weights have low variance. Therefore, we will now derive the \emph{optimal} IS scheme of this form. Ideally the weights will have \emph{zero} variance - in other words the RND $\frac{\dd\overleftarrow{\P}^{\rd_1, Q'}}{\dd\overrightarrow{\P}^{\rd_0, Q}}(\X)$ will be constant $=1$. This is the case if and only if the path measures are the same, i.e. if the CTMC in reverse time is a time-reversal of the CTMC running in forward time. It is well-known that this is equivalent to:
\begin{align}
\label{eq:time_reversal}
    Q_t'(y,x) = Q_t(x,y)\frac{q_t(y)}{q_t(x)}\quad \text{for all }y\neq x
\end{align}
where $q_t$ denotes the true marginal of $X_t$, i.e. $X_t\sim q_t$. As we strive to make $q_t=\rd_t$, it is natural to set $q_t=\rd_t$ in \cref{eq:time_reversal} and define $Q_t'=\bar{Q}_t$ as
\begin{subequations}
\label{eq:Q_bar_def}
\begin{align}
\label{eq:Q_bar_de_1}
    \bar{Q}_t(y,x) = &Q_t(x,y)\frac{\rd_t(y)}{\rd_t(x)}\quad \text{for all }y\neq x\\
\label{eq:Q_bar_def_2}
    \bar{Q}_t(x,x)=&-\sum\limits_{y\in S,y\neq x}Q_t(x,y)\frac{\rd_t(y)}{\rd_t(x)}
\end{align}
\end{subequations}
Let us now return to the proactive update that we defined in \cref{eq:A_t_definition}. We can now rigorously characterize it. Plugging in the definition of $\bar{Q}$, we can use \cref{prop:rnd_forward_backward_ctmcs} to obtain the main result of this section:
\begin{theorem}
\label{thm:is_theorem}
For the proactivate updates $A_t$ as defined in \cref{eq:A_t_definition} and $\bar{Q}_t$ as defined in \cref{eq:Q_bar_def}, it holds:
\begin{align}
\label{eq:A_t_to_RND}
   A_t+F_t-F_0 = \log\frac{\dd\overleftarrow{\P}^{\rd_t, \bar{Q}}}{\dd\overrightarrow{\P}^{\rd_0, Q}}(\X)
\end{align}
This implies that we obtain a valid IS scheme fulfilling:
\begin{align}
\label{eq:is_scheme_is_theorem}
\mathbb{E}_{x\sim \rd_t}[h(x)] = \frac{\mathbb{E}[\exp(A_t)h(X_t)]}{\mathbb{E}[\exp(A_t)]}\quad (0\leq t \leq 1)
\end{align}
i.e. \cref{eq:reweighted_equals_target} holds. Further, $A_t$ will have zero variance for every $0\leq t\leq 1$ if and only if $X_t\sim \rd_t$ for all $0\leq t\leq 1$.
\end{theorem}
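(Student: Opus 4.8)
The plan is to prove the central identity \cref{eq:A_t_to_RND} by evaluating its right-hand side with \cref{prop:rnd_forward_backward_ctmcs}, taking $\mu=\rd_0$, $\nu=\rd_t$, forward rate $Q$, and backward rate $Q'=\bar{Q}$, and then simplifying each of the three resulting terms. For the endpoint term I would use $\log\rd_s(x)=F_s-U_s(x)$ to write $\log\rd_t(X_t)-\log\rd_0(X_0)=F_t-F_0-U_t(X_t)+U_0(X_0)$. For the integral term, splitting the sum in the definition of $\kop_s$ in \cref{eq:proactivate_operator} into its $y=x$ contribution $Q_s(x,x)$ and the remainder $-\bar{Q}_s(x,x)$ (by \cref{eq:Q_bar_def_2}) gives $\bar{Q}_s(X_s,X_s)-Q_s(X_s,X_s)=\kop_s\rd_s(X_s)+\partial_sU_s(X_s)$, so the integral equals $A_t+\int_0^t\partial_sU_s(X_s)\,\dd s$ by \cref{eq:A_t_definition}. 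For the jump term, \cref{eq:Q_bar_def} gives $\bar{Q}_s(X_s^-,X_s)/Q_s(X_s,X_s^-)=\rd_s(X_s^-)/\rd_s(X_s)$, whose logarithm is $U_s(X_s)-U_s(X_s^-)$.

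Collecting these, the claim \cref{eq:A_t_to_RND} reduces to the pathwise identity
\[
U_t(X_t)-U_0(X_0)=\int_0^t\partial_sU_s(X_s)\,\dd s+\sum_{s,\,X_s^-\neq X_s}\big(U_s(X_s)-U_s(X_s^-)\big).
\]
This is a fundamental-theorem-of-calculus/telescoping statement for $s\mapsto U_s(X_s)$ along a c\`adl\`ag path: since $S$ is finite the CTMC a.s. makes finitely many jumps on $[0,t]$, so $X_s$ is piecewise constant; on each interval of constancy $U_s(X_s)$ is $C^1$ in $s$ (using that $U_t$ is continuously differentiable in $t$) with derivative $\partial_sU_s(X_s)$, and at each jump it increments by $U_s(X_s)-U_s(X_s^-)$. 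Adding the continuous increments (the integral) to the jump increments recovers the total change. I expect this to be the main technical step; everything else is bookkeeping. Substituting it back cancels every term involving $U$ and leaves $\log(\mathrm{RND})=A_t+F_t-F_0$, establishing \cref{eq:A_t_to_RND}.

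For the IS claim, \cref{eq:A_t_to_RND} gives $\exp(A_t)=\exp(F_0-F_t)\,\frac{\dd\overleftarrow{\P}^{\rd_t,\bar{Q}}}{\dd\overrightarrow{\P}^{\rd_0,Q}}(\X)$, i.e. $\exp(A_t)$ equals the RND up to the path-independent factor $\exp(F_0-F_t)$. Applying the defining property of the RND relative to the forward law $\overrightarrow{\P}^{\rd_0,Q}$ yields, for any bounded $h$,
\[
\mathbb{E}[\exp(A_t)h(X_t)]=\exp(F_0-F_t)\,\mathbb{E}_{\X\sim\overleftarrow{\P}^{\rd_t,\bar{Q}}}[h(X_t)]=\exp(F_0-F_t)\,\mathbb{E}_{x\sim\rd_t}[h(x)],
\]
where the last equality uses that the reverse chain is initialised at $\rd_t$, so $X_t\sim\rd_t$ under $\overleftarrow{\P}^{\rd_t,\bar{Q}}$. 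Taking $h\equiv 1$ identifies the normaliser $\mathbb{E}[\exp(A_t)]=\exp(F_0-F_t)$, and dividing gives \cref{eq:is_scheme_is_theorem}.

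Finally, because $\exp(A_t)$ is the RND up to a deterministic constant and $\mathbb{E}[\exp(A_t)]=\exp(F_0-F_t)$, the weight $A_t$ has zero variance iff $\exp(A_t)\equiv\exp(F_0-F_t)$ almost surely, i.e. iff the RND is identically $1$, i.e. iff the laws $\overleftarrow{\P}^{\rd_t,\bar{Q}}$ and $\overrightarrow{\P}^{\rd_0,Q}$ coincide on $[0,t]$. Matching the time-$t$ marginals of two coinciding laws (forward endpoint $q_t$ versus backward initial $\rd_t$) forces the true marginal $q_t=\rd_t$; conversely, if $q_s=\rd_s$ for all $s\le t$ then by the time-reversal characterisation around \cref{eq:time_reversal} the rate $\bar{Q}$ is exactly the time-reversal of $Q$, so the laws coincide and the variance vanishes. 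Imposing zero variance for every $t\in[0,1]$ is therefore equivalent to $X_t\sim\rd_t$ for all $t$, completing the proof.
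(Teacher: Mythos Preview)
Your proposal is correct and follows essentially the same approach as the paper: both invoke \cref{prop:rnd_forward_backward_ctmcs} with $\mu=\rd_0$, $\nu=\rd_t$, $Q'=\bar{Q}$, reduce the jump sum via \cref{eq:Q_bar_de_1}, and appeal to the same pathwise fundamental-theorem-of-calculus identity for $s\mapsto U_s(X_s)$ along a c\`adl\`ag trajectory to cancel the $U$-terms. Your identity $\bar{Q}_s(X_s,X_s)-Q_s(X_s,X_s)=\kop_s\rd_s(X_s)+\partial_sU_s(X_s)$ is a slightly cleaner bookkeeping of the integral term than the paper's end-of-proof computation, and your treatment of the zero-variance equivalence is more explicit than the paper's (which largely defers to the discussion around \cref{eq:time_reversal}), but the substance is the same.
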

A proof can be bound in \cref{appendix:proof_is_theorem}. 
Note that \cref{thm:is_theorem} is useful because we can, in principle, compute $A_t$, i.e. there are no unknown variables, and that this holds for arbitrary $Q_t$. This theorem can be seen as a discrete state space equivalent of the generalized version of the Jarzynski equality \cite{jarzynski1997, vaikuntanathan2008} that has also recently been used for sampling in continuous spaces \cite{vargas2024transport, albergo2024nets}. Finally, it is important to note that the IS scheme will not have zero variance if it does not hold that $X_t\sim \rd_t$.

\section{PINN Objective}

As a next step, we introduce a learning procedure for learning an optimal rate matrix of a CTMC. For this, we denote with $Q_t^\theta$ a parameterized rate matrix with parameters $\theta$ (e.g. represented in a neural network). Our goal is to learn $Q_t^\theta$ such that $X_t\sim\rd_t$ is fulfilled for all $0\leq t\leq 1$. By \cref{thm:is_theorem} this equivalent to minimizing the variance of the IS weights. To measure the variance the weights, it is common to use the log-variance divergence \citep{nüsken2023, richter2023improved} given by
\begin{align*}
\mathcal{L}^{\text{log-var}}(\theta;t)=&\mathbb{V}_{\X\sim \mathbb{Q}}[\log\frac{\dd\overleftarrow{\P}^{\rd_t, \bar{Q}^\theta}}{\dd\overrightarrow{\P}^{\rd_0, Q^\theta}}(\X)]\\
=&\mathbb{V}_{\X\sim \mathbb{Q}}[A_t+F_t-F_0]\\
=&\mathbb{V}_{\X\sim \mathbb{Q}}[A_t]
\end{align*}
where $\Q$ is a reference measuring whose support covers the support of $\overleftarrow{\P}^{\rd_t, \bar{Q}^\theta}$ and $\overrightarrow{\P}^{\rd_0, Q^\theta}$ and where we used that $F_0,F_t$ are constants. The above loss is tractable but we can bound it by a loss that is computationally more efficient. To do so, we use an auxiliary \textbf{free energy network} $F_t^\phi:\mathbb{R}\to\mathbb{R}$ with parameters $\phi$. Note that $F_t^\phi$ is a one-dimensional function and therefore induces minimal additional computational cost. As before, let the operator $\mathcal{K}_t^\theta$ be defined as:
\begin{align}
\label{eq:k_t_operator_neural_net}
\mathcal{K}_t^\theta\rd_t(x)=&-\partial_tU_t(x)-\sum\limits_{y\in S}Q_t^\theta(x,y)\frac{\rd_t(y)}{\rd_t(x)}
\end{align}
\begin{proposition}
\label{thm:pinn_theorem}
For any reference measure $\mathbb{Q}$, the \textbf{PINN-objective} defined by 
\begin{align*}
    \mathcal{L}(\theta,\phi;t) = \mathbb{E}_{s\sim \text{Unif}_{[0,t]},x_s\sim \mathbb{Q}_s}\left[|\kop_s^\theta\rd_s(x_s)-\partial_sF_s^\phi|^2\right]
\end{align*}
has a unique minimizer $(\theta^*,\phi^*)$ such that $Q_t^{\theta*}$ satisfies the KFE and $F_t^{\phi^*}=F_t$  is the free energy. Further, this objective is an upper bound to the log-variance divergence:
\begin{align*}
\mathcal{L}^{\text{log-var}}(\theta;t)\leq t^2\mathcal{L}(\theta,\phi;t)
\end{align*}
In particular, if $\mathcal{L}(\theta,\phi;t)=0$, then also $\mathcal{L}^{\text{log-var}}(\theta;t)=0$ and the variance of the IS weights is zero.
\end{proposition}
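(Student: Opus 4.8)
The plan is to dispatch the three assertions in turn, and since the final two are quick consequences of the variance bound I would invest most effort in characterizing the minimizers. I would start from the fact that $\mathcal{L}(\theta,\phi;t)$ is an average of a squared quantity, hence nonnegative, with $\mathcal{L}(\theta,\phi;t)=0$ if and only if the integrand vanishes almost everywhere, i.e.
\begin{align*}
\kop_s^\theta\rd_s(x) = \partial_s F_s^\phi \quad \text{for a.e.\ } s\in[0,t],\ \mathbb{Q}_s\text{-a.e.\ } x .
\end{align*}
Assuming $\mathbb{Q}_s$ charges all of $S$, this holds for every $x\in S$. Its right-hand side is independent of $x$, so the condition says precisely that $x\mapsto\kop_s^\theta\rd_s(x)$ is constant, say $\equiv c_s$. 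I would then reuse the identity behind the definition of $\kop$: writing the KFE residual $E_s(x):=\partial_s\rd_s(x)-\sum_y Q_s^\theta(x,y)\rd_s(y)$, dividing by $\rd_s(x)$ and using $\partial_s\log\rd_s(x)=\partial_s F_s-\partial_s U_s(x)$ yields $E_s(x)/\rd_s(x)=\partial_s F_s+\kop_s^\theta\rd_s(x)$. Thus $\kop_s^\theta\rd_s\equiv c_s$ is equivalent to $E_s(x)=(c_s+\partial_s F_s)\,\rd_s(x)$.

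The crucial step is to pin down $c_s$. Summing the residual and using that every column of a rate matrix sums to zero, $\sum_x E_s(x)=\partial_s\!\sum_x\rd_s(x)-\sum_y\rd_s(y)\sum_x Q_s^\theta(x,y)=0$, while the displayed identity gives $\sum_x E_s(x)=c_s+\partial_s F_s$; hence $c_s=-\partial_s F_s$. Substituting back gives $E_s(x)\equiv 0$, so $Q^\theta$ satisfies the KFE exactly, and simultaneously $\partial_s F_s^\phi=c_s=-\partial_s F_s$, so $F_s^\phi$ matches the true free energy up to the sign/normalization convention and the additive constant set at $s=0$. Conversely, if $Q^\theta$ solves the KFE then $E_s\equiv0$ forces $\kop_s^\theta\rd_s\equiv-\partial_s F_s$, and taking $\partial_s F_s^\phi=-\partial_s F_s$ makes $\mathcal{L}=0$. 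This exactly characterizes the zero-loss configurations by \textbf{KFE $+$ correct free energy}; I would note that uniqueness is really at the level of $F^\phi$, since the KFE alone leaves $Q^\theta$ underdetermined.

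For the upper bound I would exploit that subtracting a deterministic constant leaves the variance unchanged. Set $g_s:=\kop_s^\theta\rd_s(X_s)-\partial_s F_s^\phi$; since $A_t=\int_0^t\kop_s^\theta\rd_s(X_s)\dd s$ and $\int_0^t\partial_s F_s^\phi\dd s$ is deterministic, \cref{thm:is_theorem} gives $\mathcal{L}^{\text{log-var}}(\theta;t)=\mathbb{V}_{\X\sim\mathbb{Q}}[A_t]=\mathbb{V}_{\X\sim\mathbb{Q}}\big[\int_0^t g_s\dd s\big]$. Bounding variance by the second moment and applying Cauchy--Schwarz in time,
\begin{align*}
\mathbb{V}\Big[\int_0^t g_s\dd s\Big]
&\leq \mathbb{E}\Big[\Big(\int_0^t g_s\dd s\Big)^2\Big]\\
&\leq t\,\mathbb{E}\Big[\int_0^t g_s^2\dd s\Big]
= t\int_0^t\mathbb{E}_{\mathbb{Q}_s}[g_s^2]\dd s,
\end{align*}
where the last equality uses that $\mathbb{Q}_s$ is the time-$s$ marginal of $\mathbb{Q}$; this equals $t^2\mathcal{L}(\theta,\phi;t)$ since $\mathcal{L}(\theta,\phi;t)=\tfrac1t\int_0^t\mathbb{E}_{\mathbb{Q}_s}[g_s^2]\dd s$ by definition. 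The final claim is then immediate: if $\mathcal{L}(\theta,\phi;t)=0$ the bound forces $\mathbb{V}[A_t]=0$, so $A_t$ is a.s.\ constant and the weights $\exp(A_t)$ have zero variance.

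As for difficulty, the variance estimate is a routine variance-$\le$-second-moment plus Cauchy--Schwarz computation, and the final implication is a one-line corollary; the one step needing genuine care is fixing $c_s=-\partial_s F_s$, because constancy of $\kop_s^\theta\rd_s(\cdot)$ does not \emph{a priori} look like the KFE, and it is precisely the probability-conservation identity $\sum_x E_s(x)=0$ that upgrades ``constant residual ratio'' to ``vanishing residual.'' I would also state explicitly the two standing assumptions: that $\mathbb{Q}_s$ has full support on $S$ (so a.e.-vanishing becomes pointwise), and that the classes for $Q^\theta$ and $F^\phi$ are expressive enough for the value $0$ to be attained, without which ``minimizer'' should be read as ``infimizer.''
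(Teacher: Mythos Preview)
Your argument is correct and, for the variance bound, essentially coincides with the paper's: the paper phrases the first step via the variational characterization $\mathbb{V}[A_t]=\min_{c}\mathbb{E}[(A_t-c)^2]$ and then applies Jensen's inequality to $s\mapsto \kop_s^\theta\rd_s(X_s)-\partial_s\hat F_s$, whereas you subtract the specific deterministic constant $\int_0^t\partial_sF_s^\phi\,\dd s$, bound variance by the second moment, and apply Cauchy--Schwarz in $s$; these are the same inequality in different clothing and yield the identical $t^2\mathcal{L}(\theta,\phi;t)$ bound.

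Where your write-up genuinely adds to the paper is the characterization of the zero-loss set. The paper's appendix proof focuses almost exclusively on the upper bound and only remarks on tightness; it does not spell out why $\mathcal{L}=0$ forces the KFE and pins down the free energy. Your argument via the residual $E_s(x)=\partial_s\rd_s(x)-\sum_yQ_s^\theta(x,y)\rd_s(y)$, the identity $E_s(x)/\rd_s(x)=\kop_s^\theta\rd_s(x)+\partial_sF_s$, and the probability-conservation constraint $\sum_xE_s(x)=0$ is the clean way to close this, and it also surfaces two points the paper glosses over: (i) the sign---at the optimum $\partial_sF_s^\phi=-\partial_sF_s$, so the statement ``$F_t^{\phi^*}=F_t$'' is only correct up to a sign/additive convention, exactly as you flag; and (ii) uniqueness is really only at the level of $F^\phi$, since many rate matrices solve the KFE. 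Both caveats are accurate and worth keeping in your write-up.
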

A proof can be found in \cref{appendix:proof_pinn_theorem}. Note that we can easily minimize the PINN objective via stochastic gradient descent (see \cref{alg:train-with-buffer}). It is "off-policy" as the reference distribution $\mathbb{Q}$ is arbitrary. This objective can be seen as the CTMC equivalent of that in \cite{mate2023learning,albergo2024nets, tian2024, sun2024}. 

\begin{figure}[ht]
\vskip 0.2in
\begin{center}
\centerline{\includegraphics[width=0.7\columnwidth]{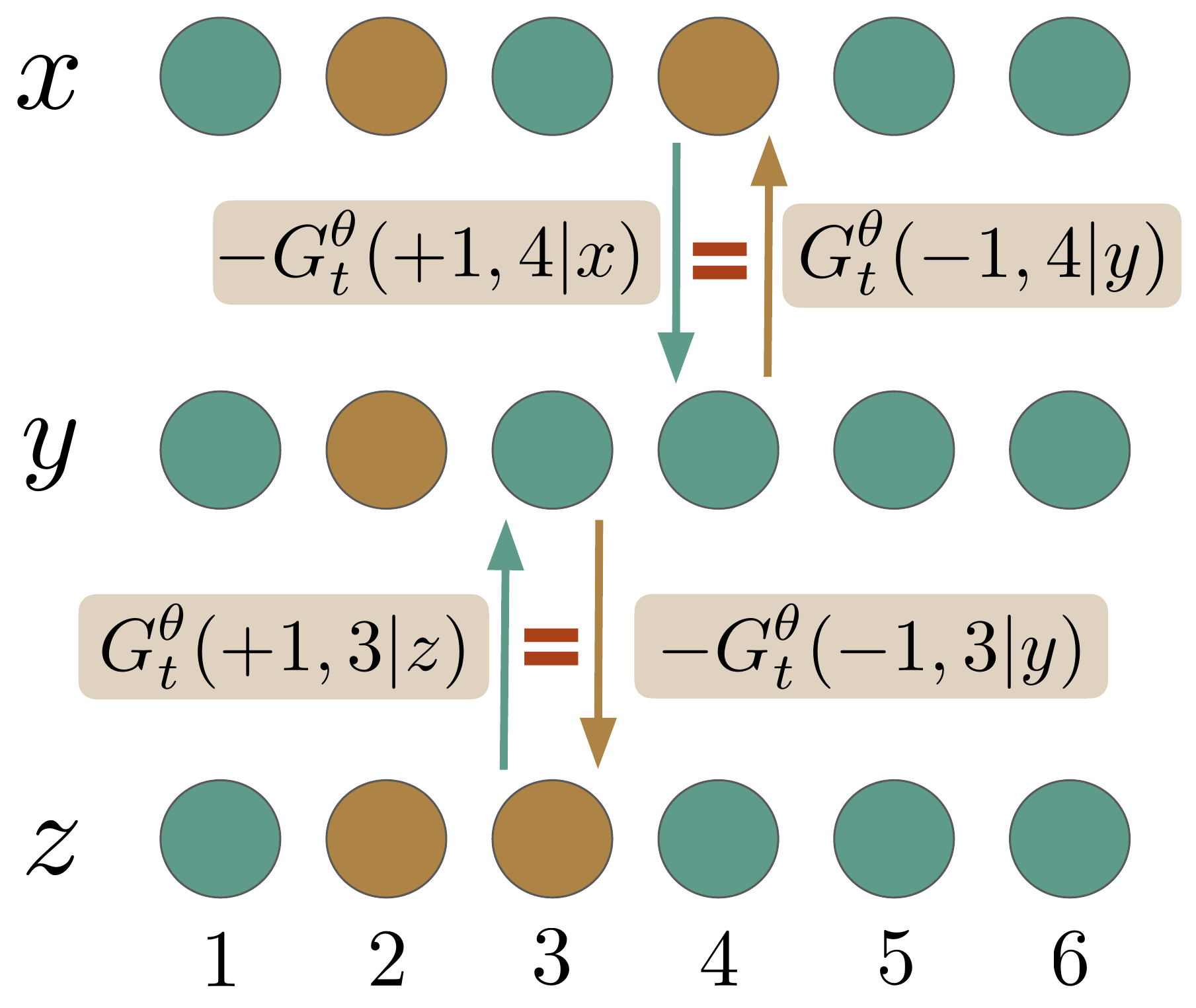}}
\caption{Visualization of local equivariance. Two tokens $\mathcal{T}=\{-1,+1\}$ and $d=6$. Local equivariance means that the \say{flux} to transition to a neighbor is the negative of the flux of transitioning from that neighbor back.}
\label{fig:local_equivariance}
\end{center}
\vskip -0.2in
\end{figure}

\section{Adding Annealed IS and SMC}
It is possible to add an arbitrary MCMC scheme to the above dynamics. This effectively combines an \say{unlearned} MCMC scheme with a learned transport. Most of the time, MCMC schemes are formulated as \emph{discrete} time Markov chains. Hence, we first describe how they can be formulated as CTMCs. For every fixed $0\leq t\leq 1$, an MCMC scheme for the distribution $\rd_t$ is given by a stochastic matrix $\mathcal{M}_t(y,x)$, i.e. $\mathcal{M}_t(y,x)\geq 0$ with $\sum_{y\in S}\mathcal{M}_t(y,x)=1$. These are constructed to satisfy the corresponding \emph{detailed balance condition}:
\begin{align}
\label{eq:db}
\mathcal{M}_t(y,x)\rd_t(x) = \mathcal{M}_t(x,y)\rd_t(y)
\end{align}
We can convert this into an annealed CTMC scheme by only accepting updates with a probability scaled by a parameter $\epsilon_t\geq 0$:
\begin{align*}
    Q_t^\text{MCMC}(y,x) = \begin{cases} 
    \epsilon_t \mathcal{M}_t(y,x) & \text{ if }y\neq x\\
\epsilon_t(\mathcal{M}_t(x,x)-1)& \text{ if }y= x
    \end{cases}
\end{align*}
The rate matrix $Q_t^{MCMC}$ satisfies \eqref{eq:db} so that the second term of the $\mathcal K_t$-operator (see \eqref{eq:proactivate_operator}) vanishes:
\begin{align*}
\sum\limits_{y\in S}Q_t^{\text{MCMC}}(x,y)\frac{\rd_t(y)}{\rd_t(x)}
=\sum\limits_{y\in S}Q_t^{\text{MCMC}}(y,x)=0
\end{align*}
where we used \cref{eq:q_t_cond}. This implies that the rate matrix
\begin{align*}
Q_t^\theta(y,x)+Q_t^\text{MCMC}(y,x)
\end{align*}
will have the same PINN loss and the same IS weights for the same trajectories - because the $\kop_t^\theta$ remains unchanged (Note that while the RND in \cref{eq:A_t_to_RND} is the same, the  path measures do change). Specifically, this means that we can sample and compute the weights via \cref{alg:samp}. The parameter $\epsilon_t$ controls \say{how much local MCMC mixing} we want to induce. Note that the IS weights can be used both for reweighting at the final time or in addition to resample the walkers along the trajectories, connecting it to the SMC literature \citep{doucet2001}. We specify that one may want to do this whenever the effective sample size (ESS) (see Appendix \ref{app:exp}) drops below a threshold.

\textbf{Generalization of AIS and SMC.} For $Q_t^\theta=0$, the above dynamics describe a continuous formulation of AIS that can be simulated approximately via \cref{alg:samp}. In particular, this means that the algorithm presented here is a \textbf{strict generalization of AIS and SMC} \citep{neal2001annealed, doucet2001}. Note that in the $h\rightarrow 0$, $\epsilon_t \rightarrow \infty$ limit, LEAPS would recover the exact distributions $\rd_t$ with AIS (i.e. even with $Q_t^{\theta} = 0$). Of course, this asymptotic limit is not realizable in practice with finite number of steps, and the inclusion of $Q_t^{\theta}$ allows us to sample much more efficiently while still maintaining statistical guarantees.

\begin{algorithm}[tb]
\caption{LEAPS Sampling}
\label{alg:samp}
\begin{algorithmic}[1]
\STATE \textbf{Require:} $N$ time steps, $M$ walkers, model $F_t^{\theta}$, replay buffer $\mathcal{B}$, MCMC kernel $\mathcal M_t$, density $\rd_t$, coeff. $\epsilon_t\geq 0$, resample thres. $ 0 \leq \delta \leq 1$
\STATE \textbf{Init: }$X_0^m\sim \rho_0,A_0^m=0$\quad ($m=1,\dots,M)$ 
\STATE \textbf{Set }$h=1/N$
\FOR{$n = 0$ to $N-1$}
    \FOR{$m = 1$ to $M$}    
        \STATE $X_t^m \sim \mathcal{M}_t(\cdot,X_t^m)$  with prob.  $h\epsilon_t$ else $X_t^m$ 
        \STATE $X^m_{t+h} \sim (\mathbf{1}_{X_{t}=y}+hQ_t^\theta(y,X_{t}))_{y\in S}$ 
        \vspace{0.05cm}
        \STATE $A^m_{t + h} =A^m_{t}+h\mathcal{K}_{t}^\theta\rd_t(X^m_{t})$
        \ENDFOR
    \STATE $t\leftarrow t+h$
    \IF{$\text{ESS}(A_t) \leq \delta$}
    \STATE $X_t = \text{resample}(X_t, A_t) \quad  (m=1,\dots,M)$
    \STATE $A_t = 0 \qquad \qquad \qquad \,\,\,\,  \quad  (m=1,\dots,M)$
    \ENDIF
    \ENDFOR
    \STATE \textbf{Optional:} Store $\{(X_t^m, A_t^m, t)\}_{t,m}$  in $\mathcal{B}$.
\end{algorithmic}
\end{algorithm}


\section{Efficient IS and Training via Local Equivariance}

We now turn to the question of how to make the above training procedure efficient. Note that for small state spaces $S$ we could rely on analytical solutions to the KFE \citep{campbell2022continuous, shaul2024flow}. In many applications, though, the state space $S$ is so large 
that we cannot store $|S|$ elements efficiently in a computer. Often state spaces $S$ are of the form $S=\mathcal{T}^d$ where $\mathcal{T}=\{1,\dots,N\}$ is a set of $N$ tokens. We use the notation $\tau$ for a \textbf{token}, i.e. an element $\tau\in\mathcal{T}$. One then defines a notion of a \textbf{neighbor} $y$ of $x$, i.e. an element $y=(y_1,\dots,y_d)$ that differs from $x$ in at most one dimension (i.e. $y_i\neq x_i$ for at most one $i$). We denote as $N(x)$ the set of all neighbors of $x$. We then restrict functional form of the rate matrices to only allow for jumps to neighbors, i.e. $Q_t^\theta(y,x)=0$ if $y\notin N(x)$. One can then use a neural network $Q_t^\theta$ represented by the function
\begin{align*}
Q_t^\theta:S&\to(\mathbb{R}^{N-1})^d\\
x &\mapsto (Q_t^\theta(\tau,i|x))_{i=1,\dots,d,\tau\in\mathcal{T}\setminus\{x_i\}}
\end{align*}
i.e. the neural network is given by the function $x\mapsto Q_t^\theta(\cdot|x)$ that returns for very dimension $i$ a value for every token $\tau$ different from $x_i$. We then parameterize a rate matrix via
\begin{align}
\label{eq:q_t_neighbors}
    Q_t^\theta(y,x) = \begin{cases}
        0 & \text{if }y\notin N(x)\\
        Q_t^\theta(y^j,j|x)& \text{else if }y_j\neq x_j\\
        -\sum\limits_{i,\tau} Q_t^\theta(\tau,i|x)& \text{if }x=y\\
    \end{cases}
\end{align}
This parameterization is commonly used in the context of discrete markov models ("discrete diffusion models") \citep{campbell2022continuous,campbell2024generative}. With that, the operator $\kop_t^\theta$ in \cref{eq:proactivate_operator} becomes:
\begin{align*}
&\kop_t^\theta\rd_t(x)+\partial_tU_t(x)\\
=&\sum_{\substack{
  i = 1, \dots, d \\ 
  y \in N(x),\, y_i \neq x_i
}}\left[Q_t^\theta(y^i,i|x)-Q_t^\theta(x^i,i|y)\frac{\rd_t(y)}{\rd_t(x)}\right]
\end{align*}
The key problem with the above update is that it requires us to evaluate the neural network $|N(x)|$ times (for every neighbor $y$). This makes computing $\mathcal{K}_t^\theta$ computationally prohibitively expensive. Hence, \textbf{with the standard rate matrix parameterization, the proactive IS sampling scheme and training via the PINN-objective is very \emph{inefficient}}. 

To make the computation of $\kop_t^\theta$ efficient, we choose to induce an \textbf{inductive bias} into our neural network architecture to compute $\kop_t^\theta$ with no additional cost. Specifically, we introduce here the notion of \textbf{local equivariance}. A neural network $G^\theta_t$ represented by the function
\begin{align*}
    G_{t}^\theta: S &\to (\mathbb{R}^{N-1})^d\\
    x&\mapsto (G_{t}^\theta(\tau,i|x))_{i=1,\dots,d,\tau\in\mathcal{T}\setminus\{x_i\}}
\end{align*}
is called \textbf{locally equivariant} if the following condition holds:
\begin{align*}    G_{t}^\theta(\tau,i|x)=-G_{t}^\theta(x^i,i|\text{Swap}(x,i,\tau))\quad (i=1,\dots,d)\\
    \text{where}\quad \text{Swap}(x,i,\tau)=(x_1,\dots,x_{i-1},\tau,x_{i+1},\dots,x_d)
\end{align*}
In other words, the function $G_{t}^\theta$ gives the \say{flux of probability} going from $x$ to each neighbor. Local equivariance says that the flux from $x$ to its neighbor is negative the flux from the neighbor to $x$ (see \cref{fig:local_equivariance}).

Therefore, every coordinate map $F_j$ is equivariant with respect to transformations of the $j$-th input (\say{locally} equivariant). Note that we do not specify how $F_i$ transforms for $i\neq j$ under transformations of $x_j$. This distinguishes it from "full" equivariance as, for example, used in geometric deep learning \citep{bronstein2021geometric,weiler2019general, thomas2018tensor}. We can use a locally equivariant neural network to parameterize a rate matrix via:
\begin{align}
\label{eq:local_equiv_rate_matrix}
    Q_t^\theta(\tau,j|x)=[G_{t}^\theta(\tau,j|x)]_{+}
\end{align}
where $[z]_{+}=\max(z,0)$ describes the ReLU operation. This representation is not a restriction (see \cref{appendix:proof_universal_representation_theorem} for a proof):
\begin{proposition}[Universal representation theorem]
\label{thm:universal_representation_theorem}
For any CTMC as in \cref{eq:q_t_neighbors} with marginals $\rd_t$, there is a corresponding CTMC with the same marginals $\rd_t$ and a rate matrix that can be written as in \cref{eq:local_equiv_rate_matrix} for a locally equivariant function $G_{t}^\theta$.
\end{proposition}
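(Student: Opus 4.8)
The plan is to exploit the fact that the marginals of a CTMC depend on its rate matrix only through the \emph{net probability flux}, and then to exhibit a locally equivariant $G_t$ whose induced rate matrix (via \cref{eq:local_equiv_rate_matrix}) realizes the flux of the given CTMC. For a neighbor-only rate matrix $Q_t$ as in \cref{eq:q_t_neighbors} and neighboring states $x,y\in N(x)$, I would define the net flux from $x$ to $y$,
\begin{align*}
    J_t(y,x) := Q_t(y,x)\rd_t(x) - Q_t(x,y)\rd_t(y),
\end{align*}
which is antisymmetric, $J_t(y,x)=-J_t(x,y)$. First I would rewrite the KFE \cref{eq:kfe} in terms of this flux: expanding the diagonal via \cref{eq:q_t_cond_2} gives $\partial_t\rd_t(x)=\sum_{y\in N(x)}J_t(x,y)$. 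The crucial consequence is that the KFE only sees the flux, so any rate matrix sharing the flux $J_t$ (and started from $\rd_0$) has the same marginals $\rd_t$. This reduces the problem to realizing $J_t$ by a rate matrix of locally equivariant form.

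Next I would construct a \emph{one-directional} rate matrix carrying exactly this flux. Using that $\rd_t(x)=\exp(-U_t(x))/Z_t>0$, set
\begin{align*}
    \hat{Q}_t(y,x) := \frac{[J_t(y,x)]_+}{\rd_t(x)}\qquad (y\in N(x)),
\end{align*}
with the diagonal fixed by \cref{eq:q_t_cond_2} and $\hat Q_t(y,x)=0$ for $y\notin N(x)$. The key computation verifies that $\hat Q_t$ has the same flux as $Q_t$: its flux from $x$ to $y$ is $\hat Q_t(y,x)\rd_t(x)-\hat Q_t(x,y)\rd_t(y)=[J_t(y,x)]_+-[J_t(x,y)]_+=[J_t(y,x)]_+-[-J_t(y,x)]_+=J_t(y,x)$, where the last step uses antisymmetry together with the elementary identity $[a]_+-[-a]_+=a$. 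Thus $\hat Q_t$ is a genuine rate matrix (nonnegative off-diagonal by the ReLU), permits only neighbor jumps, and by the previous paragraph $\rd_t$ solves its KFE, i.e. it has marginals $\rd_t$. By construction at most one of $\hat Q_t(y,x),\hat Q_t(x,y)$ is nonzero on each edge.

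Finally I would read off a locally equivariant $G_t$. Writing $y=\text{Swap}(x,i,\tau)$ in the coordinate notation of \cref{eq:q_t_neighbors}, I set
\begin{align*}
    G_t(\tau,i|x) := \hat Q_t(y,x)-\hat Q_t(x,y) = \hat Q_t(\tau,i|x)-\hat Q_t(x^i,i|y).
\end{align*}
Interchanging the roles of $x$ and $y$ (i.e. $\tau\leftrightarrow x^i$) negates the right-hand side, which is precisely the local equivariance condition $G_t(\tau,i|x)=-G_t(x^i,i|\text{Swap}(x,i,\tau))$. Moreover, since at most one of the two nonnegative terms is active per edge, $[G_t(\tau,i|x)]_+=\hat Q_t(y,x)=\hat Q_t(\tau,i|x)$, so $G_t$ induces $\hat Q_t$ through \cref{eq:local_equiv_rate_matrix}, which finishes the argument. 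I expect the main obstacle to be the density weighting: the naive antisymmetrization of the original flux (dividing $J_t(y,x)$ by $\rd_t(x)$) fails local equivariance because the source densities $\rd_t(x)$ and $\rd_t(y)$ differ across the edge. Routing the construction through the one-directional $\hat Q_t$, so that only a single density factor is ever active on each edge, is exactly what reconciles local equivariance with flux preservation, with the ReLU identity $[a]_+-[-a]_+=a$ as the lever that makes it go through.
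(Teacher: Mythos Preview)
Your proof is correct and takes essentially the same approach as the paper: construct a one-way rate matrix with the same marginals, then read off a locally equivariant function. Your $\hat Q_t(y,x)=[J_t(y,x)]_+/\rd_t(x)$ coincides exactly with the paper's $\bar Q_t(y,x)=[Q_t(y,x)-Q_t(x,y)\rd_t(y)/\rd_t(x)]_+$, and your $G_t=\hat Q_t(y,x)-\hat Q_t(x,y)$ agrees with the paper's case-based definition once one-wayness is used; your flux framing and the identity $[a]_+-[-a]_+=a$ are a clean repackaging of the paper's direct algebraic verification of the KFE.
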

Note that this representation implies that rates only go from $x$ to $y$ or from $y$ to $x$, and in that sense are kinetically favorable \citep{shaul2024flow}. Crucially, this representation allows to efficiently compute $\kop_t^\theta$ in one forward pass of the neural network:
\begin{align*}
&\kop_t^\theta\rd_t(x)+\partial_tU_t(x)\\
=&\sum_{\substack{
  i = 1, \dots, d \\ 
  y \in N(x),\, y_i \neq x_i
}}\left[[G_{t}^\theta(y^i,i|x)]_{+}-[-G_{t}^\theta(y^i,i|x)]_{+}\frac{\rd_t(y)}{\rd_t(x)}\right]
\end{align*}
With this, we can efficiently compute the proactive IS update $A_t$ and evaluate the PINN-objective. Therefore, this construction allows for scalable training and efficient proactivate importance sampling. We call the resulting algorithm \textbf{LEAPS} (\textbf{L}ocally \textbf{E}quivariant discrete \textbf{A}nnealed \textbf{P}roactivate  \textbf{S}ampler). The acronym also highlights that we use a Markov \emph{jump} process to sample (i.e. that takes "leaps" through space).
\begin{remark}
It is important to note that with the above construction, we would need to naively evaluate $\rd_t(x)$ as often as $d$ times for a single computation of $\mathcal{K}_t^\theta$. However, note that the sum goes over all neighbors over $x$. Therefore, this can be a considered as computing a \emph{discrete gradient}. Such ratios can often be computed efficiently, e.g. for many scientific and physical models it is often only $2\times$ the computation compared to a single evaluation of $\rd_t(x)$.
\end{remark}
\begin{figure*}[ht]
\begin{center}
\centerline{\includegraphics[width=0.7\linewidth]{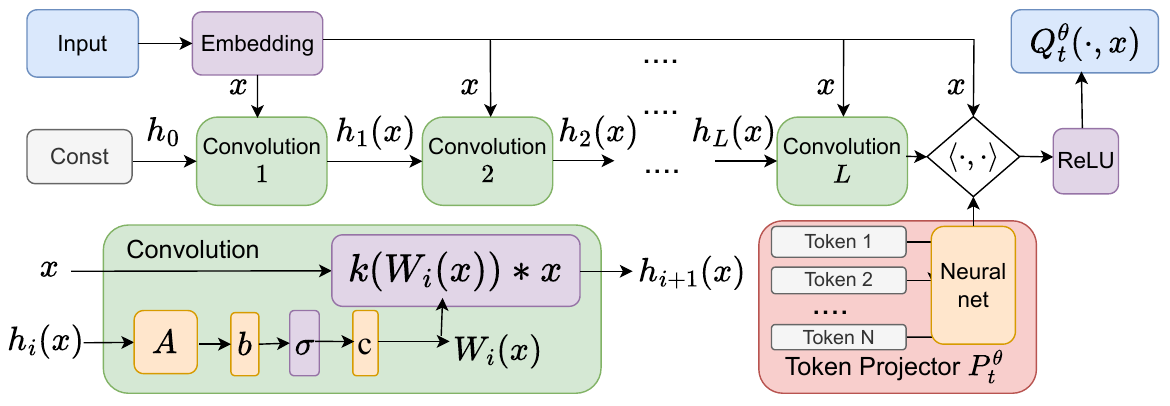}}
\caption{Overview of locally equivariant convolutional neural network architecture.}
\label{fig:conv_net_architecture}
\end{center}
\vskip -0.4in
\end{figure*}

\section{Design of Locally Equivariant Networks}
It remains to be stated how to construct locally equivariant neural networks - a question we turn to in this section. We will focus on three fundamental designs used throughout deep learning: Multilayer perceptrons (MLPs), attention layers, and convolutional neural networks. Usually, tokens are embedded as token vectors $e_\tau\in\mathbb{R}^{\cin}$ where $\cin$ is the embedding dimension. We therefore consider the embedded sequence of vectors: $x=(x_1,\dots,x_d)\in(\mathbb{R}^{\cin})^d$ as the input to the neural network. The specific designs we discuss here are based on two components: (1) a locally-invariant \textbf{prediction head} given by 
\begin{align*}
    H_t^\theta:(\mathbb{R}^{\cin})^d\to& (\mathbb{R}^{\cout})^d\\
    x\mapsto& (H_t^\theta(1|x),\dots, H_t^\theta(d|x))
\end{align*}
i.e. $H_t^\theta$ fulfills that $H_t^\theta(i|\text{Swap}(x,i,\tau))=H_t^\theta(i|x)$ for all $x,i,\tau$. (2) This is combined with a small network $P_t^\theta:\mathcal{T}\to\mathbb{R}^k$ that we call \textbf{token projector}. The full network is then given by
\begin{align*}    G_{t}^\theta(\tau,j|x)=(P_t^\theta(e_\tau)-P_t^\theta(x_j))^TH_t^\theta(j|x)
\end{align*}
In \cref{appendix:conv_net_locally_equivariant}, we verify that $G_{t}^\theta$ defined in this way is locally equivariant. We discuss specific instances of this design.

\textbf{Multilayer perceptron (MLP).} Let us set $\cin=1$ in this paragraph for readability. Let $W^1,\dots,W^k\in\mathbb{R}^{d\times d}$ be a set of weight matrices with a zero diagonal, i.e. $W_{ii}=0$ for $i=1,\dots,d$. Further, let $\sigma:\mathbb{R}\to\mathbb{R}$ be an activation function and $\omega_{\tau}\in\mathbb{R}^k$ be a learnable projection vector for every token $\tau\in\mathcal{T}$. Then define the map:
\begin{align*}
    G_{t}^\theta(\tau,j|x) =& \sum\limits_{i=1}^{k}(\omega_\tau^i-\omega_{x_j}^i)\sigma(W^ix)_j
\end{align*}
where $\sigma(W^ix)_j$ denotes the $j$-th element of the vector obtained by multiplying the vector $x$ with the matrix $W^i$ and applying the activation function $a$ componentwise. This is a locally equivariant  MLP with one hidden layer.

\textbf{Locally-equivariant attention (LEA) layer.} Let us consider a self-attention layer operating on keys $k_j=k_j(x_j)$, queries $q_j=q_j(x_j)$, and values $v_j=v_j(x_j)$ - each of which is a function of element $x_j$. We define the locally equivariant attention layer then as:
\begin{align*}
G_{t}^\theta(\tau,j|x)=(\omega_\tau-\omega_{x_j})^T\left[\sum\limits_{s\neq j}\frac{\exp(k_s^Tq_j)}{\sum\limits_{t\neq j}\exp(k_t^Tq_j)}v_s\right]
\end{align*}
It can be shown that this layer is locally equivariant if the queries $q_j$ are independent of the sign of $x_j$ (i.e. $q_j(x_j)=q_j(-x_j)$) which can be easily achieved. By stacking across multiple attention heads, one can create a locally equivariant MultiHeadAttention (LEA) with this.\\

\textbf{Hierchical local equivariance.} Local equivariance is different from \say{proper} equivariance in that \textbf{the composition of locally equivariant functions is not locally equivariant} in general. Therefore, we cannot simply compose locally equivariant neural network layers as we would do with \say{proper} equivariant neural networks. In particular, the above MLP and the attention layers cannot simply be composed as their composition would violate the local equivariance. This fundamentally changes considerations about how to compose layers and how to construct \emph{deep} neural networks. We will now illustrate this for the case of convolutional neural networks.

\textbf{Locally-equivariant convolutional (LEC) network.} To construct a locally equivariant convolutional neural network (LEC), we assume that our data lies on a grid. A convolutional layer is characterized by a matrix $W\in \mathbb{R}^{(2k-1)\times (2k-1)}$ and its operation is denoted via $k(W)*x$ where $k$ denotes the convolutional kernel with weights $W$. Here, we set the center of $W$ to zero: $W_{kk}=0$ (i.e. such that corresponding location is effectively ignored). To stack such layers, we can make the output of the previous layer feed into the \emph{weights} of the next layer:
\begin{align*}
    W_i =& \sigma(A_ih_i+b_i)+c_j\quad &(i=1,\dots, L)\\
    h_{i+1} = & k_t(W_i) * x\quad &(i=1,\dots, L)\\
    H_t^\theta(x)=&h_{L}
\end{align*}
where $A_i\in \mathbb{R}^{d_i\times d_i}, b_i\in\mathbb{R}^{d_i}, c_i\in \mathbb{R}^{d_i}$ are learnable tensors which operate on each coordinate independently (i.e. a 1x1 convolution) and $\sigma:\mathbb{R}\to\mathbb{R}$ is an activation function to make it non-linear.  With this construction, one can stack deep highly complex convolutional neural networks. Note that this convolutional neural network has two (separate) symmetries: it is geometrically translation equivariant and locally equivariant in the sense defined in this work.

\section{Related Work}
\label{sec:related_work}
\textbf{CTMCs.} CTMCs \cite{campbell2022continuous} have been used for various applications in generative modeling ("discrete diffusion" models), including text and image generation \cite{shi2024simplified,gatdiscrete,shaul2024flow, sahoo2024simple} and molecular design \cite{gruver2023protein, campbell2024generative, lisanza2024}. While here we use a RND for CTMCs running in \emph{reverse} time, one recovers the loss functions of these generative models considering a RND of two \emph{forward} time CTMCs (see \cref{sec:proof_rnds}).

\textbf{Transport and sampling.}
Over the past decade there has been continued interest in combining  MCMC and IS with learning transport maps. A non-parametric version of this is described in \cite{marzouk2016}, and a parametric version through coupling-based normalizing flows was used to study systems in molecular dynamics and statistical physics \cite{noe2019, albergo2019, gabrie2022, wang2022}. These methods were extended to weave normalizing flows with SMC moves \cite{arbel2021, matthews2022}. More recent research focuses on replacing the generative model with a continuous flow or diffusion \cite{zhang2022path, vargas2023denoising, akhound2024iterated, sun2024}. Our method is inspired by approaches combining measure transport with MCMC schemes \cite{albergo2024nets, vargas2024transport} and other samplers relying on PINN-based objectives in continuous spaces \citep{mate2023learning, tian2024, sun2024}.

\textbf{Discrete Neural samplers.} The primary alternative to what we propose is to correct using importance weights arising from the estimate of the probability density computed using an autoregressive model \cite{nicoli2020, wu2019solving,mcnaughton2020boosting}. However, the computational cost of producing samples in this case scales naively as $O(d)$, whereas we have no such constraint \textit{a priori} in our case so long as the error in the Euler sampling scheme is kept small. Other work focuses on discrete formulations of normalizing flows, but the performant version reduces to an autoregressive model \cite{tran2019}. Recent work has considered using CTMCs for sampling by parameterizing their evolution operators directly via tensor networks \citep{causer2025discrete} as opposed to neural network representations of rate matrices here. Finally, discrete diffusion models have already been explored in the context of unsupervised neural optimization \citep{sanokowski2024diffusion, sanokowski2025scalable}.


\section{Experiments}
As a demonstration of the validity of LEAPS in high dimensions, we benchmark it on models of statistical physics, namely the Ising model and the Potts model. We apply LEAPS on each model on a $15 \times 15$ lattice  corresponding to a $d=15\times 15 = 225$ dimensional space. To construct $\rho_t$, we use temperature annealing making the inverse temperature $\beta_t$ a linear function of time.

\textbf{Ablation - Ising model.} To establish the validity of our method also experimentally, we compare our results against a ground truth of long-run Glauber dynamics, an efficient algorithm for simulation in this parameter regime. In \cref{fig:ising}, one can see that LEAPS recovers the expected physical observables. Further, in \cref{fig:ising} we compare three different realizations of our method, one using LEA, and the other two using deep LEC that vary in depth. The deep LEC network performs best and we use it for all subsequent experiments.

\begin{figure}[ht]
\vskip -0.1in
\begin{center}
\centerline{\includegraphics[width=1.0\columnwidth]{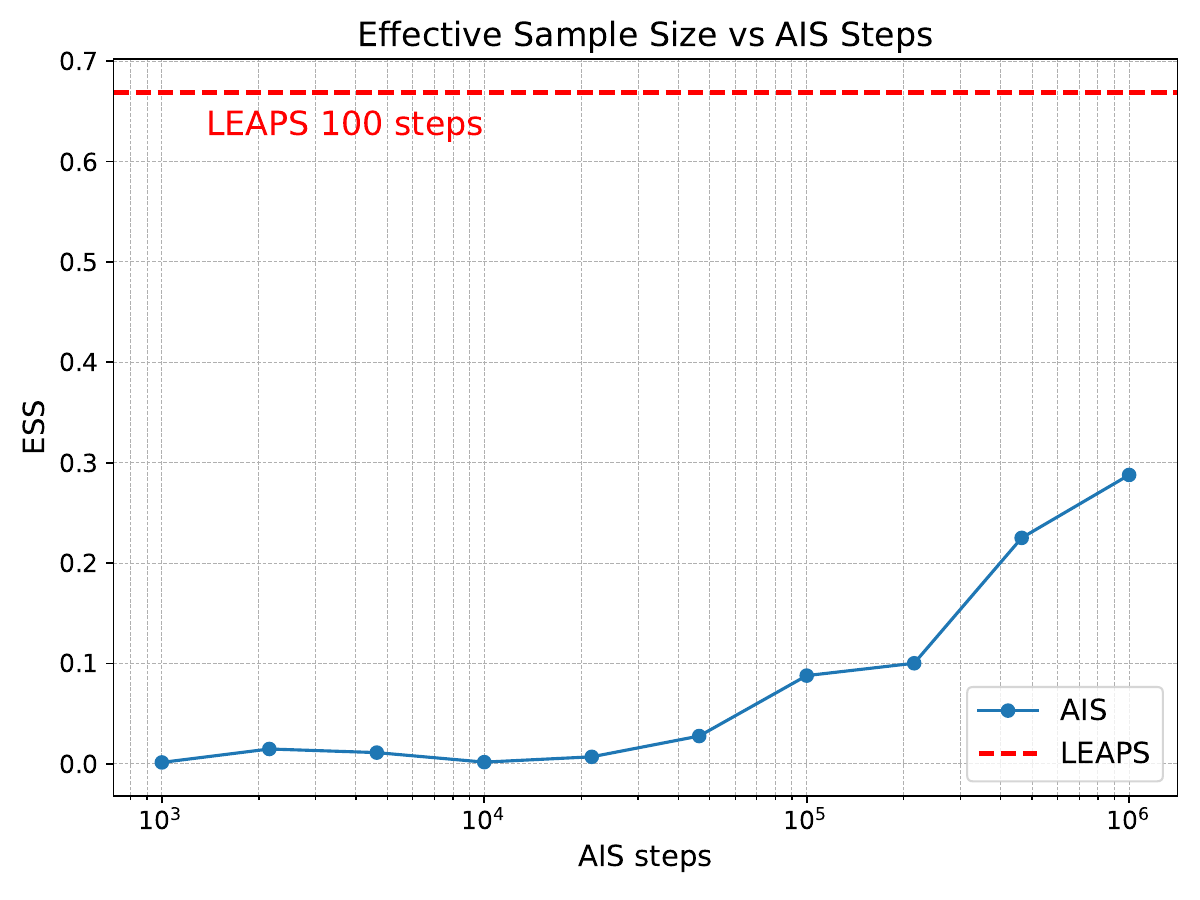}}
\vskip -0.2in
\caption{LEAPS vs AIS on Ising model. Comparison across different number of simulation steps for AIS. LEAPS is run for 100 steps. Note, however, that LEAPS requires neural net evaluations.}
\label{fig:leaps_vs_ais_ising}
\end{center}
\vskip -0.3in
\end{figure}
\textbf{Benchmarks - Ising model.} We then apply LEAPS on the parameters corresponding to the corresponding critical temperatures of the Ising model, i.e. the ``hardest'' temperature to sample from. Here, LEAPS achieves a high effective sample size (ESS) of $\sim 68\%$. In \cref{fig:leaps_vs_ais_ising}, we compare LEAPS against annealed importance sampling (AIS) and see that AIS needs around $~10^6$ integration steps to achieve a similar ESS (vs. $100$ integration steps from LEAPS). Next, we use the DISCS benchmark \citep{goshvadi2023discs} to benchmark LEAPS against MCMC samplers (see \cref{fig:leaps_vs_mcmc_ising}). The ESS of LEAPS is significantly higher, even after accounting for the number of function evaluations (NFEs). 

\begin{figure}[ht]
\vskip -0.1in
\begin{center}
\centerline{\includegraphics[width=1.0\columnwidth]{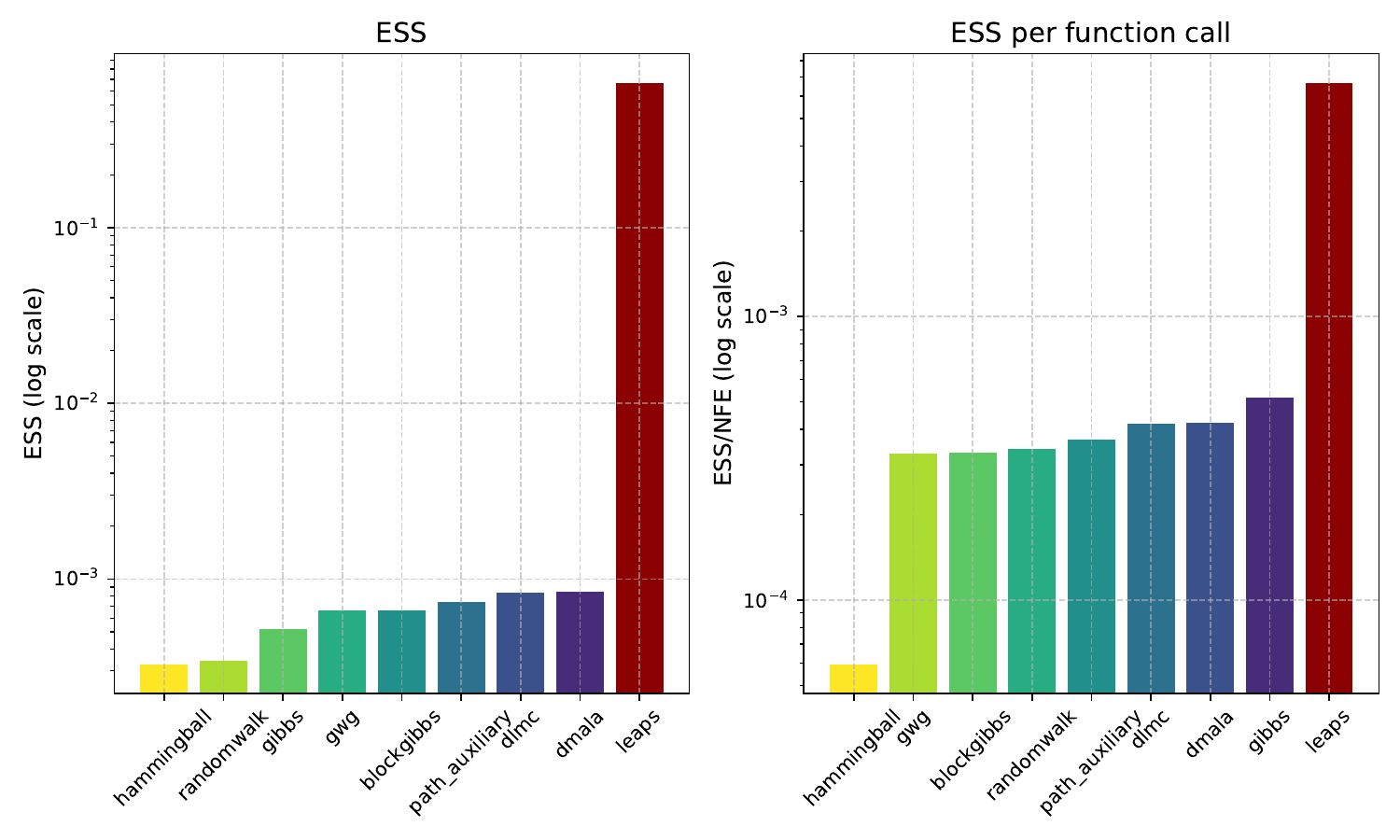}}
\vskip -0.2in
\caption{LEAPS vs MCMC samplers on Ising model (DISCS benchmark \citep{goshvadi2023discs}). Note that the comparison has limitations: Function call for LEAPS is neural network evaluations and energy calls for MCMC samplers. Further, ESS is measured differently for MCMC samplers.}
\label{fig:leaps_vs_mcmc_ising}
\end{center}
\vskip -0.3in
\end{figure}

\textbf{Benchmarks - Potts model.} We also apply the LEAPS method on the Potts model following \citet{goshvadi2023discs}. 
LEAPS achieves a high ESS ($\sim 20\%$) in only 100 integration steps while AIS has a lower ESS even for $10^6$ integration steps (see \cref{fig:leaps_vs_ais_potts}). Also on the DISCS benchmark, LEAPS shows a higher ESS than MCMC samplers.

Overall, the above results demonstrate the validity of LEAPS and that post-training, it is able to generate realistic samples with few simulation steps, while leveraging importance weights to recover exact observables.
\begin{figure}[ht]
\vskip -0.1in
\begin{center}
\centerline{\includegraphics[width=1.0\columnwidth]{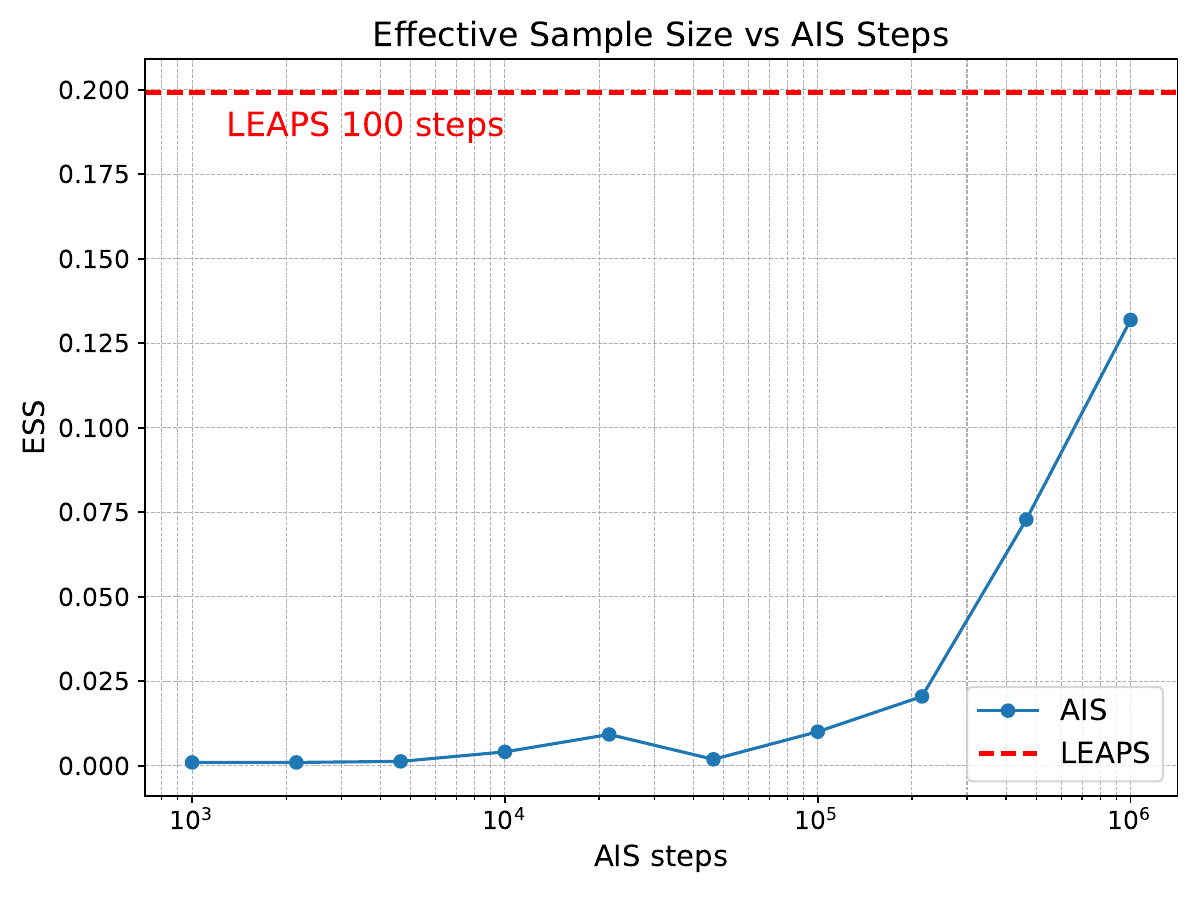}}
\vskip -0.2in
\caption{LEAPS vs AIS on Potts model for critical temperature. Comparison across different number of simulation steps for AIS. LEAPS is run for 100 steps.}
\label{fig:leaps_vs_ais_potts}
\end{center}
\vskip -0.3in
\end{figure}

\section{Discussion}
In this work, we present the LEAPS algorithm to learn to sample from discrete distributions via  continuous-time Markov chains (``discrete diffusion''). Crucially, we translated the sampling problem into a neural network design problem. While we presented first locally equivariant neural network designs, we anticipate that future work will focus on building more scalable and more expressive locally equivariant  network architectures. Another direction of future work will be to connect the ideas presented here with guidance and reward fine-tuning of generative model - a problem strongly tied to sampling. Further, neural samplers like LEAPS can be used to sample across a whole family of distributions as opposed to only for a single target.

\newpage
\section*{Impact Statement}
This paper presents work whose goal is to advance the field of Machine Learning. There are many potential societal consequences of our work, none which we feel must be specifically highlighted here.

\section*{Acknowledgements}
We thank Eric Vanden-Eijnden for useful discussions. PH acknowledges support from the Machine Learning for Pharmaceutical Discovery and Synthesis (MLPDS) consortium, the DTRA Discovery of Medical Countermeasures Against New and Emerging (DOMANE) threats program, and the NSF Expeditions grant (award 1918839) Understanding the World Through Code. MSA is supported by a Junior Fellowship at the Harvard Society of Fellows as well as the National Science Foundation under Cooperative Agreement PHY-2019786 (The NSF AI Institute for Artificial Intelligence and Fundamental Interactions, http://iaifi.org/). 

\bibliography{main}
\bibliographystyle{icml2025}

\newpage
\appendix
\onecolumn
\begin{figure*}[ht]
\centering
    \vspace{-0.25cm}
    \includegraphics[width=0.9\linewidth]{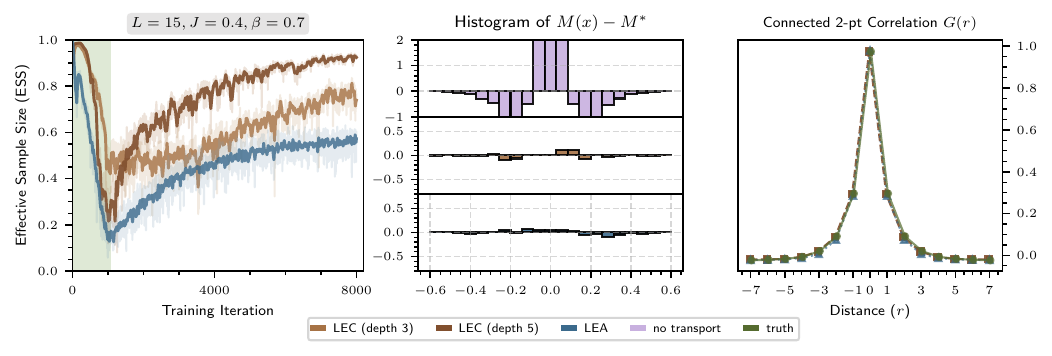}
    \vspace{-0.65cm}
    \caption{Ablation experiments on simple Ising model with $L=15, J = 0.4, \beta = 0.7$ with the LEA and LEC architectures. 
    \textbf{Left:} Effective sample size of LEAPS samplers over training. Green area denotes an annealing phase during training where $t$ is increased from $0$ to $1$. Increasing the depth of LEC significantly improves performance. For the locally equivariant attention (LEA) mechanism, we use 40 attention heads, each with query, key, and value matrices of dimension 40x40. As such, there are about 350,000 parameters in the model. In addition, the locally equivariant convolutional net (LEC) of depth three uses kernel sizes of  [5, 7, 15], while the depth five version uses [3, 5, 7, 9, 15], amounting to around 100,000 parameters.  \textbf{Center:} Difference in the histograms of the magnetization $M(x)$ of configurations as compared to the ground truth set attained from a Glauber dynamics run of 25,000 steps, labeled as $M^*$. We denote by "no transport" the case of using  annealed dynamics with just the marginal preserving MCMC updates to show that the transport from $Q_t$ is essential in our construction. \textbf{Right:} Comparison of the 2-point correlation function for the LEA and LEC samplers against the Glauber dynamics ground truth.}
    \label{fig:ising}
\end{figure*}
\begin{figure*}[ht]
\centering
    \vspace{-0.25cm}
    \includegraphics[width=0.7\linewidth]{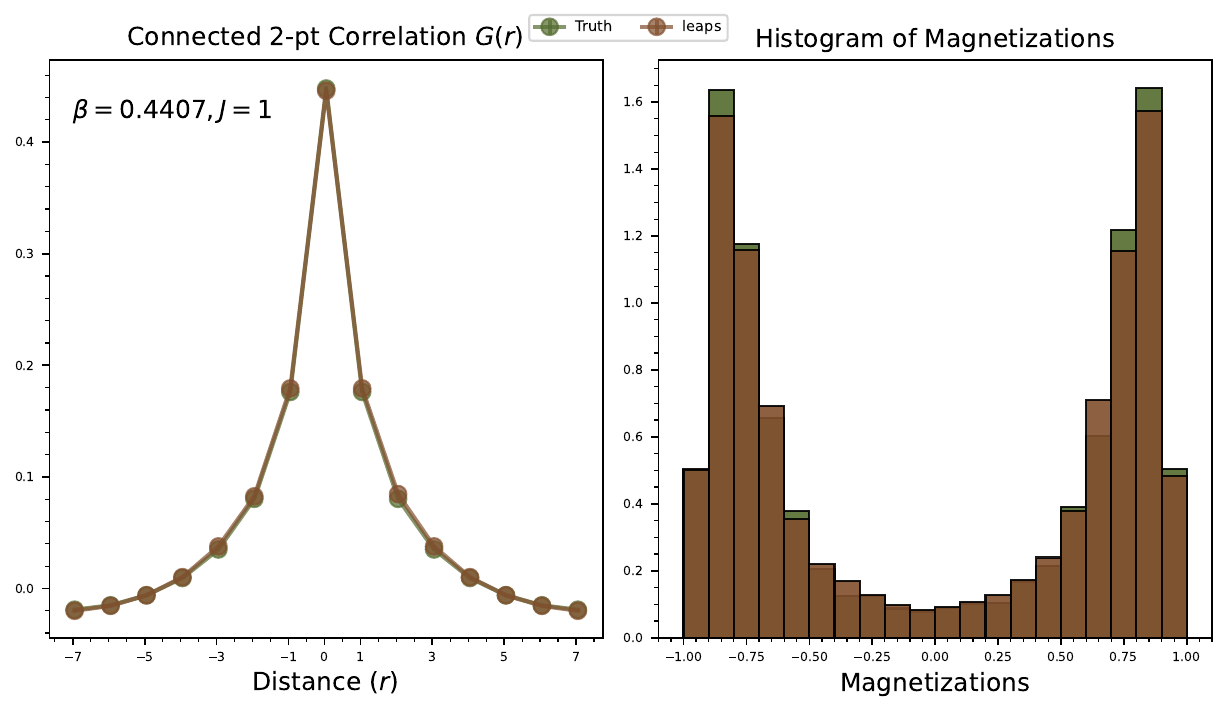}
    \vspace{-0.65cm}
    \caption{Repeat experiments from \cref{fig:ising} for Ising model with critical temperature.  \textbf{Left:} Comparison of the 2-point correlation function against the Glauber dynamics ground truth. \textbf{Right:} Histograms of the magnetization $M(x)$ of configurations as compared to the ground truth set attained from a Glauber dynamics run of 25,000 steps. LEAPS recovers the expected observables.}
    \label{fig:ising}
\end{figure*}

\begin{figure*}[ht]
\centering
    \vspace{-0.25cm}
    \includegraphics[width=0.7\linewidth]{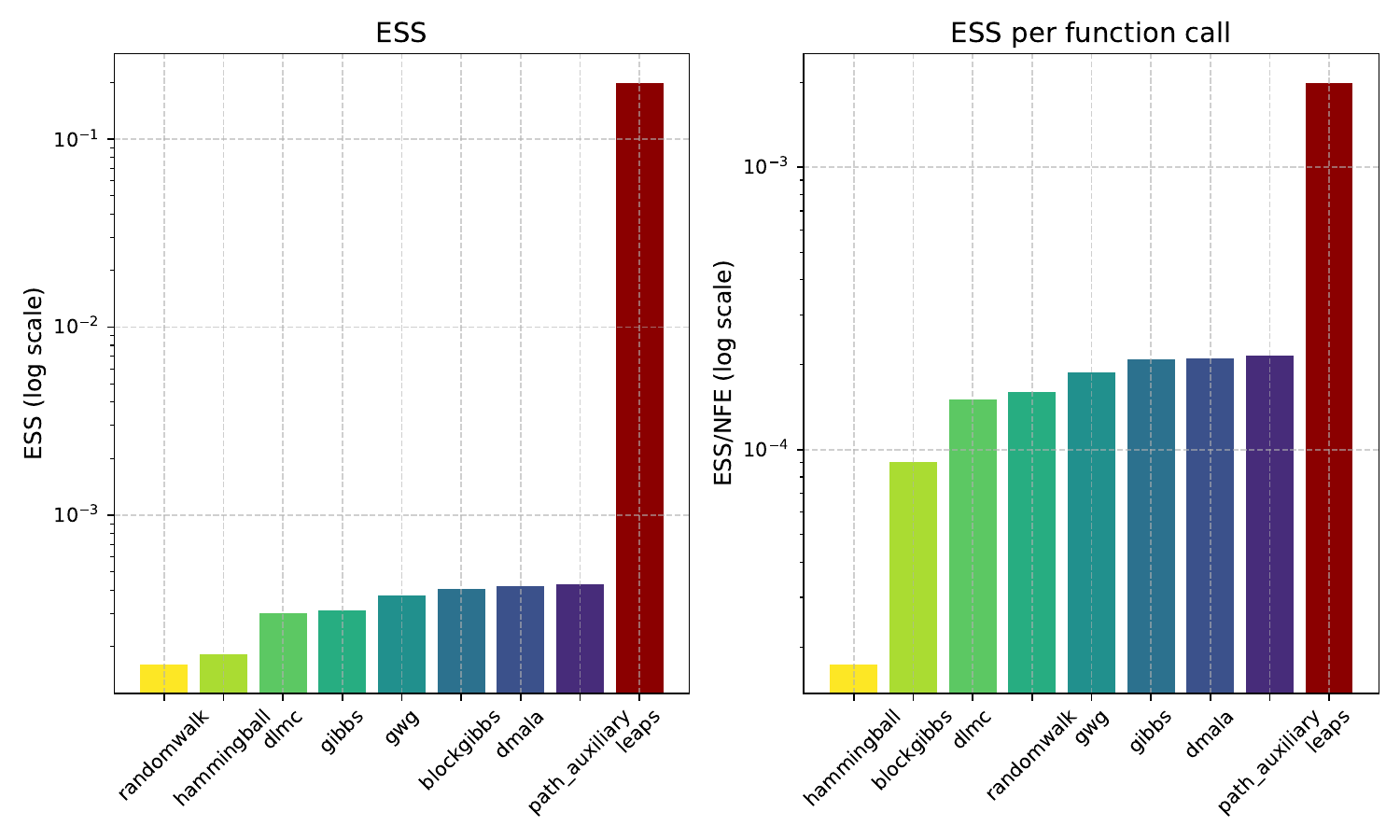}
    \vspace{-0.65cm}
    \caption{LEAPS vs MCMC samplers on Potts model (DISCS benchmark \citep{goshvadi2023discs}). Note that the comparison has limitations: Function call for LEAPS is neural network evaluations and energy calls for MCMC samplers. Further, ESS is measured differently for MCMC samplers.}
\label{fig:LEAPS_vs_MCMC_Potts}
\end{figure*}
\begin{algorithm}[tb]
\caption{LEAPS training with optional replay buffer}
\label{alg:train-with-buffer}
\begin{algorithmic}[1]
\REQUIRE \(B\) batch size, \(N\) time steps, model \(G^\theta_t\), free energy net $F_t^\phi$, learning rate \(\eta\), replay buffer \(\mathcal{B}\).
\WHILE{not converged}
    \IF{\(\texttt{use\_buffer}\)}
    \STATE $(X_{t_m}^m,A_{t_m}^m, t_m)_{m=1,\dots,B} \leftarrow \text{SampleBatch}(\mathcal{B})$
    \ELSE
        \STATE $(X_{t_m}^m,A_{t_m}^m, t_m)_{m=1,\dots,B}\leftarrow$  \cref{alg:samp}
    \ENDIF
    \STATE $\mathcal{L}(\theta,\phi) = \frac{1}{B}\sum_{m} |\kop_{t_m}^\theta\rd_{t_m}(X_{t_m}^m)-\partial_{t_m}F_{t_m}^\phi|^2$
    \STATE \(\theta \gets \theta - \eta \,\nabla_\theta \mathcal{L}(\theta,\phi) \)
    \STATE \(\phi \gets \phi - \eta \,\nabla_\phi \mathcal{L}(\theta,\phi) \)\
\ENDWHILE
\end{algorithmic}
\end{algorithm}
\section{Proof of \cref{prop:rnd_forward_backward_ctmcs}}
\label{sec:proof_rnds}
Without loss of generality, we set the final time point to be $t=1$. We compute for a bounded continuous function $\Phi:\tspace\to \mathbb{R}$:
\begin{align*}
		&\mathbb{E}_{\X\sim \overrightarrow{\P}^{\mu, Q}}[\Phi(\X)]\\
		=&\lim\limits_{n\to\infty}\mathbb{E}_{\X\sim \overrightarrow{\P}^{\mu, Q}}
[\Phi(X_0,X_{1/n},X_{2/n},\dots, X_{\frac{n-1}{n}},X_1)]\\
=&\lim\limits_{n\to\infty}
		\E_{\X\sim \overleftarrow{\P}^{\nu, Q'}}\left[
\Phi(X_0,X_{1/n},X_{2/n},\dots, X_{\frac{n-1}{n}},X_1) \frac{\overrightarrow{\P}^{\mu, Q}(X_0,X_{1/n},\dots,X_{\frac{n-1}{n}},X_1)}{\overleftarrow{\P}^{\nu, Q'}(X_0,X_{1/n},X_{2/n},\dots, X_{\frac{n-1}{n}},X_1)}\right]\\
		=&\lim\limits_{n\to\infty}
		\mathbb{E}_{\X\sim \overleftarrow{\P}^{\nu, Q'}}\left[\Phi(X_0,X_{1/n},X_{2/n},\dots, X_{\frac{n-1}{n}},X_1)\frac{\mu(X_0)}{\nu(X_1)}\prod\limits_{s=0,1/n,2/n,\dots,\frac{n-1}{n}}\frac{\overrightarrow{\P}^{\mu, Q}(X_{s+h}|X_{s})}{\overleftarrow{\P}^{\nu, Q'}(X_{s}|X_{s+h})}\right]\\
		=&\lim\limits_{n\to\infty}
		\mathbb{E}_{\X\sim \overleftarrow{\P}^{\nu, Q'}}\left[\Phi(\X)\frac{\mu(X_0)}{\nu(X_1)}
\exp\left(h\sum\limits_{s,X_{s+h}=X_{s}}Q_s(X_{s},X_s)-Q_{s+h}'(X_{s},X_{s})\right)
		\prod\limits_{s,X_{s+h}\neq X_{s}}\frac{Q_s(X_{s+h},X_s)}{Q_{s+h}'(X_{s},X_{s+h})}\right]
		\\
=&\mathbb{E}_{\X\sim \overleftarrow{\P}^{\nu, Q'}}\left[\Phi(\X)\frac{\mu(X_0)}{\nu(X_1)}
\exp\left(\int\limits_{0}^{1}Q_s(X_{s},X_s)-Q_{s}'(X_{s},X_{s})\dd s\right)
		\prod\limits_{s,X_{s^-}\neq X_{s}}\frac{Q_s(X_{s},X_{s^-})}{Q_{s}'(X_{s^-},X_{s})}\right]
		\\
	\end{align*}
where we used the definition of the rate matrix $Q_t,Q_t'$, the continuity of $Q_t'$ in $t$ and the fact that the left and right Riemann integral coincide. As $\Phi$ was arbitrary, the RND is given by:
\begin{align*}
\log\frac{\dd\overrightarrow{\P}^{\mu, Q}}{\dd\overleftarrow{\P}^{\nu, Q'}}(\X)=&\log(\mu(X_0))-\log(\nu(X_1))+\int\limits_{0}^{1}Q_s(X_s,X_s)-Q_s'(X_s,X_s)\dd s+
		\sum\limits_{s,X_{s}^{-}\neq X_{s}}
		\log\left(\frac{Q_s(X_{s},X_s^-)}{Q_s'(X_{s}^-,X_s)}\right)
\end{align*}

\section{Proof of \cref{thm:is_theorem}}
\label{appendix:proof_is_theorem}
Specifically, we use \cref{prop:rnd_forward_backward_ctmcs} to compute
	\begin{align*}
	\log\frac{\dd\overleftarrow{\P}^{\rd_t, \bar{Q}_t}}{\dd\overrightarrow{\P}^{\rd_0, Q_t}}(\X)=&\log(\rd_t(X_t))-\log(\rd_0(X_0))+\int\limits_{0}^{t}\bar{Q}_s(X_s,X_s)-Q_s(X_s,X_s)\dd s+
	\sum\limits_{s,X_{s}^{-}\neq X_{s}}
	\log\left(\frac{\bar{Q}_s(X_{s}^-,X_s)}{Q_s(X_{s},X_s^-)}\right)\\
=&F_t - F_0 - U_t(X_t)+U_0(X_0)+\int\limits_{0}^{t}\bar{Q}_s(X_s,X_s)-Q_s(X_s,X_s)\dd s+
\sum\limits_{s,X_{s}^{-}\neq X_{s}}
\log\left(\frac{\bar{Q}_s(X_{s}^-,X_s)}{Q_s(X_{s},X_s^-)}\right)
\end{align*}
Note that the function $t\mapsto U_t(X_t)$ is a piecewise differentiable function. Therefore, we can apply the fundamental theorem on every differentiable "piece" and get:
\begin{align*}
	U_t(X_t)-U_0(X_0) =& \int\limits_{0}^{t}\partial_sU_t(X_t)\dd s +\sum\limits_{s,X_{s}^{-}\neq X_{s}}U_s(X_s)-U_s(X_s^-)\\
=&\int\limits_{0}^{t}\partial_sU_s(X_s)\dd s+\sum\limits_{s,X_{s}^{-}\neq X_{s}}\log \frac{\rd_s(X_s^-)}{\rd_s(X_s)}
\end{align*}
Next, we can insert the above equation and get:
\begin{align*}
	&\log\frac{\dd\overleftarrow{\P}^{\rd_t, \bar{Q}_t}}{\dd\overrightarrow{\P}^{\rd_0, Q_t}}(\X)\\
=&F_t - F_0 - U_t(X_t)+U_0(Y_0)+\int\limits_{0}^{t}\bar{Q}_s(X_s,X_s)-Q_s(X_s,X_s)\dd s+
\sum\limits_{s,X_{s}^{-}\neq X_{s}}
\log\left(\frac{\bar{Q}_s(X_{s}^-,X_s)}{Q_s(X_{s},X_s^-)}\right)\\
=&F_t - F_0 -\int\limits_{0}^{t}\partial_sU_s(X_s)\dd s-\sum\limits_{s,X_{s}^{-}\neq X_{s}}\log \frac{\rd_s(X_s^-)}{\rd_s(X_s)}+\int\limits_{0}^{t}\bar{Q}_s(X_s,X_s)-Q_s(X_s,X_s)\dd s+
\sum\limits_{s,X_{s}^{-}\neq X_{s}}
\log\left(\frac{\bar{Q}_s(X_{s}^-,X_s)}{Q_s(X_{s},X_s^-)}\right)\\
=&F_t - F_0 -\int\limits_{0}^{t}\partial_sU_s(X_s)\dd s+\int\limits_{0}^{t}\bar{Q}_s(X_s,X_s)-Q_s(X_s,X_s)\dd s+
\sum\limits_{s,X_{s}^{-}\neq X_{s}}
\log\left(\underbrace{\frac{\bar{Q}_s(X_{s}^-,X_s)}{Q_s(X_{s},X_s^-)}\frac{\rd_s(X_s)}{\rd_s(X_s^-)}}_{=1}\right)\\
=&F_t - F_0 -\int\limits_{0}^{t}\partial_sU_s(X_s)\dd s+\int\limits_{0}^{t}-\sum\limits_{y\neq X_s}Q_s(X_s,y)\frac{\rd_t(y)}{\rd_t(X_s)}-Q_s(X_s,X_s)\dd s+0\\
=&F_t - F_0 +\left[-\int\limits_{0}^{t}\partial_sU_s(X_s)\dd s-\int\limits_{0}^{t}\sum\limits_{y\in S}Q_s(X_s,y)\frac{\rd_t(y)}{\rd_t(X_s)}\dd s\right]\\
=&F_t-F_0+A_t
\end{align*}
where we used the definition of $A_t$ in \cref{eq:A_t_definition} and the definition of $\bar{Q}_t$ in \cref{eq:Q_bar_def}. Note that for $h=1$, we get that
\begin{align*}
    1=\mathbb{E}_{x\sim \rd_t}[h(x)]=\mathbb{E}[\exp(A_t+F_t-F_0)]=\mathbb{E}[\exp(A_t)]\exp(F_t-F_0)
\end{align*}
because $F_t,F_0$ are constants. Therefore, in particular $\mathbb{E}[\exp(A_t)]=\exp(F_0-F_t)=Z_t/Z_0$. Note that we assume that $Z_0=1$ as we know $\rd_0$. Therefore, $\mathbb{E}[\exp(A_t)]=Z_t$. This proves \cref{eq:is_scheme_is_theorem}.

\section{Proof of \cref{thm:pinn_theorem}}
\label{appendix:proof_pinn_theorem}
We can use the variational formulation of the variance as the minimizer of the mean squared error to derive a computationally more efficient upper bound, i.e. we can re-express for every $0\leq t\leq 1$:
\begin{align*}
&\mathcal{L}^{\text{log-var}}(\theta;t)\\
=&\mathbb{V}_{\X\sim \mathbb{Q}}[A_t]\\
=&\min\limits_{\hat{F}_t\in \mathbb{R}}\mathbb{E}_{\X\sim\Q}[|A_t-\hat{F}_t|^2]\\
=&t^2\min\limits_{\partial_s\hat{F}_s\in \mathbb{R},0\leq s\leq t}\mathbb{E}_{\X\sim \Q}\left[|\frac{1}{t}\int\limits_{0}^{t} \kop_s^\theta \rd_s(X_s)-\partial_s\hat{F}_s\dd s|^2\right]\\
\leq &t^2\min\limits_{\partial_s\hat{F}_s\in \mathbb{R},0\leq s \leq t}\mathbb{E}_{\X\sim \mathbb{Q}}\left[\frac{1}{t}\int\limits_{0}^{t} |\kop_s^\theta \rd_s(X_s)-\partial_s\hat{F}_s|^2\dd s\right]\\
=&t^2\min\limits_{\partial_s\hat{F}_s\in \mathbb{R},0\leq s \leq t}\mathbb{E}_{s\sim\text{Unif}_{[0,1]}, X_s\sim \mathbb{Q}_s}\left[|\kop_s^\theta \rd_s(X_s)-\partial_s\hat{F}_s|^2\right]
\end{align*}
where we used Jensen's inequality and denote with $\mathbb{Q}_s$ the marginal of $\mathbb{Q}$ at time $s$. We now arrive at the result by replacing the above with the free energy network $F_t^\phi$. Further, note that the above bound is tight for $\mathbb{Q}$-almost every $\X$:
\begin{align*}
    \kop_s^\theta \rd_s(X_s)-\partial_sF_s=C(\X_{0:t})
\end{align*}
is a constant in time $s$. However, this constant may depend on $\X$. 

\section{Proof of \cref{thm:universal_representation_theorem}}
\label{appendix:proof_universal_representation_theorem}

Before we prove the statement, we prove an auxiliary statement about one-way rate matrices. We call a rate matrix $Q_t$ a \textbf{one-way} rate matrix if 
\begin{align*}
	Q_t(y,x)\neq 0 \quad \Rightarrow Q_t(x,y)=0 \quad \text{ for all }x\neq y\\
	\Leftrightarrow \quad  Q_t(y,x)=0 \quad \text{or}\quad Q_t(x,y)=0 \quad \text{ for all }x\neq y
\end{align*}
Intuitively, a rate matrix $Q_t$ is a one-way rate matrix if we can always only go from $x\to y$ or from $y\to x$. The next proposition shows that there is no problem restricting ourselves to one-way rate matrices.
\begin{lemma}
	For every CTMC with rate matrix $Q_t$ and marginals $q_t$, there is a one-way rate matrix $\bar{Q}_t$ such that its corresponding CTMC $X_t$ has marginals $q_t$ if $X_0\sim q_0$ is initialized with the same initial distribution. Furhter, if $Q_t(y,x)=0$ for $y\neq x$, then also $\bar{Q}_t(y,x)=0$.
\end{lemma}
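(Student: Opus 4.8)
The plan is to reduce the whole statement to a single object, the \textbf{probability current}, and to exploit the fact that the Kolmogorov forward equation \cref{eq:kfe} depends on $Q_t$ only through this current. For $x\neq y$ define the net current from $y$ into $x$ by
\[
J_t(x,y) = Q_t(x,y)q_t(y) - Q_t(y,x)q_t(x),
\]
which is antisymmetric, $J_t(x,y)=-J_t(y,x)$. Splitting off the diagonal term in the KFE and using the rate-matrix condition \cref{eq:q_t_cond_2} to write $Q_t(x,x)=-\sum_{y\neq x}Q_t(y,x)$, the marginals $q_t$ of the given CTMC satisfy
\[
\partial_t q_t(x) = \sum_{y\neq x}\left[Q_t(x,y)q_t(y)-Q_t(y,x)q_t(x)\right] = \sum_{y\neq x}J_t(x,y).
\]
Hence any rate matrix that is initialized at $q_0$ and reproduces the currents $J_t(x,y)$ will have the same marginals $q_t$.

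Next I would \emph{define} the candidate one-way rate matrix by a ReLU construction mirroring \cref{eq:local_equiv_rate_matrix}:
\[
\bar{Q}_t(x,y) = \frac{[J_t(x,y)]_+}{q_t(y)}\quad (x\neq y),
\]
with $[z]_+=\max(z,0)$, and I would fix the diagonal $\bar{Q}_t(x,x)$ by \cref{eq:q_t_cond_2}. Since $[J_t(x,y)]_+\geq 0$ and $q_t(y)>0$, all off-diagonal entries are nonnegative, so $\bar{Q}_t$ is a legitimate rate matrix.

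The verification then splits into three short checks. (i) \textbf{One-way property:} for each unordered pair $\{x,y\}$ the two candidate rates involve $[J_t(x,y)]_+$ and $[J_t(y,x)]_+=[-J_t(x,y)]_+$, and at most one of these is nonzero, so at most one of $\bar{Q}_t(x,y),\bar{Q}_t(y,x)$ is nonzero. (ii) \textbf{Same current:} using the elementary identity $[z]_+-[-z]_+=z$,
\[
\bar{Q}_t(x,y)q_t(y)-\bar{Q}_t(y,x)q_t(x) = [J_t(x,y)]_+ - [-J_t(x,y)]_+ = J_t(x,y),
\]
so $\bar{Q}_t$ reproduces the currents, hence the right-hand side of the KFE; since $\bar{X}_0\sim q_0$ and the KFE is a linear ODE with a unique solution, the marginals of $\bar{X}_t$ equal $q_t$. (iii) \textbf{Preserved sparsity:} if $Q_t(y,x)=0$ then $J_t(y,x)=-Q_t(x,y)q_t(y)\leq 0$, so $[J_t(y,x)]_+=0$ and therefore $\bar{Q}_t(y,x)=0$, as claimed.

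The main obstacle is purely one of regularity and well-definedness, not of substance. In the paper's setting the marginals $q_t=\rd_t$ are strictly positive everywhere, so dividing by $q_t(y)$ is harmless, and $\bar{Q}_t(x,y)=[J_t(x,y)]_+/q_t(y)$ is continuous in $t$ because $J_t$ and $q_t$ are continuous and $[\cdot]_+$ is continuous; in particular continuity is automatic across sign changes of the current, where both directional rates vanish simultaneously. For fully general marginals one must additionally treat states with $q_t(y)=0$, but there $J_t(x,y)=-Q_t(y,x)q_t(x)\leq 0$ forces $[J_t(x,y)]_+=0$, so declaring those rates zero is consistent and the argument goes through unchanged.
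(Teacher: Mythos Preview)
Your proposal is correct and takes essentially the same approach as the paper: your $\bar{Q}_t(x,y)=[J_t(x,y)]_+/q_t(y)$ is literally the paper's $\bar{Q}_t(x,y)=[Q_t(x,y)-Q_t(y,x)q_t(x)/q_t(y)]_+$ after pulling the positive factor $q_t(y)$ out of the ReLU, and both arguments rest on the identity $[z]_+-[-z]_+=z$ to recover the KFE. Your framing via the antisymmetric current $J_t$ and the explicit three-way check (one-way, same current, sparsity) is a clean repackaging of the same computation, and your remark on the degenerate case $q_t(y)=0$ goes slightly beyond what the paper writes.
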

\begin{proof}
Let $Q_t$ be a rate matrix defining a CTMC with marginals $q_t$. Then
\begin{align*}
	\partial_t q_t(x) =& \sum\limits_{y\in S} Q_t(x,y) q_t(y)\\
	=&\sum\limits_{y\neq x} Q_t(x,y) q_t(y)-Q_t(y,x)q_t(x)\\
	=&\sum\limits_{y\neq x}\left[Q_t(x,y)-Q_t(y,x)\frac{q_t(x)}{q_t(y)}\right]q_t(y)\\
	=&\sum\limits_{y\neq x}\left[Q_t(x,y)-Q_t(y,x)\frac{q_t(x)}{q_t(y)}\right]_{+}q_t(y)-\left[Q_t(y,x)\frac{q_t(x)}{q_t(y)}-Q_t(x,y)\right]_{+}q_t(y)\\
	=&\sum\limits_{y\neq x}\left[Q_t(x,y)-Q_t(y,x)\frac{q_t(x)}{q_t(y)}\right]_{+}q_t(y)-\left[Q_t(y,x)-Q_t(x,y)\frac{q_t(y)}{q_t(x)}\right]_{+}q_t(x)\\
	=&\sum\limits_{y\neq x}\bar{Q}_t(x,y)q_t(y)-\bar{Q}_t(y,x)q_t(x)\\
	=&\sum\limits_{y\in S}\bar{Q}_t(x,y)q_t(y)
\end{align*}
where we defined
\begin{align*}
	\bar{Q}_t(y,x) = \begin{cases}
		\left[Q_t(y,x)-Q_t(x,y)\frac{q_t(y)}{q_t(x)}\right]_{+} & y\neq x \\
		-\sum\limits_{z\neq x} Q_t(z,x) &  y=x
	\end{cases}
\end{align*}
Note that
\begin{align*}
	\bar{Q}_t(y,x)>&0\\
	\Rightarrow \quad Q_t(y,x) > & Q_t(x,y)\frac{q_t(y)}{q_t(x)}\\
	\Rightarrow \quad Q_t(y,x) \frac{q_t(x)}{q_t(y)} > & Q_t(x,y)\\
	\Rightarrow \quad \left[Q_t(x,y)-Q_t(y,x) \frac{q_t(x)}{q_t(y)}\right]_{+} = & 0\\
	\Rightarrow \bar{Q}_t(x,y) =& 0
\end{align*}
Therefore, we learn that $\bar{Q}_t$ fulfils the desired condition and fulfils the KFE. Therefore, we have proved that we can swap out $Q_t$ for $\bar{Q}_t$ and we will have an CTMC with the same marginals. 
\end{proof}
Now, let us return to the proof of \cref{thm:universal_representation_theorem}. Given a rate matrix $Q_t$, we can now use a one-way rate matrix $\bar{Q}_t$ with the same marginals and define function:
\begin{align*}
    F_t(\tau,i|x) = \begin{cases}
        \bar{Q}_t(y,x) & \text{if }Q_t(y,x)>0\\
        -\bar{Q}_t(x,y) & \text{otherwise}
    \end{cases}\quad \text{ where }y=\text{Swap}(x,i,\tau)
\end{align*}
By construction, it holds that $F_t(\tau,i|x)$ is locally equivariant and that $[F_t(\tau,i|x)]_{+}=\bar{Q}_t(y,x)$. This finishes the proof.

\section{Local equivariance of ConvNet} 
\label{appendix:conv_net_locally_equivariant}
To verify the local equivariance, one can compute
\begin{align*}
G_{t}^\theta(\tau,i|x)=&(P_t^\theta(e_\tau)-P_t^\theta(x_i))^TH_t^\theta(i|x)\\
    =&-(P_t^\theta(x_i)-P_t^\theta(e_\tau))^TH_t^\theta(i|x)\\
    =&-(P_t^\theta(x_i)-P_t^\theta(e_\tau))^TH_t^\theta(i|\text{Swap}(x,i,\tau))\\
    =&-G_t(x^i,i|\text{Swap}(x,i,\tau)),
\end{align*}
where we have used the invariance of the projection head $H_t^\theta(i|x)$ to changes in the $i$-th dimension. This shows the local equivariance. 

\subsection{Universal representation of decomposition into locally-invariant prediction head}
In this section, we answer the question: To what extend do we reduce the expressive power of the neural network by imposing the above conditions? To show a universal representation theorem, define the polynomial basis features 
\begin{align*}
    P(x)=(1,x^1,x^2,x^3,x^4,\cdots)
\end{align*}
i.e. we assume infinite width of the neural network output here.
\begin{proposition}
    Any continuous function $\tilde{F}:C\to \mathbb{R}^d$ defined on a compact set $C\subset \mathbb{R}^d$ be approximated as follows: For any $\epsilon>0$, there is a function $F=(F_1,\cdots,F_d):C\to\mathbb{R}^D$ that can be decomposed in
    \begin{align*}
F_i(x)=P(x)^TH_i(x)
    \end{align*}
    for a locally-invariant function $H:C\to(\R^{k})^d$ (i.e. $H_i(x)$ is independent of $x_i$) for some $k$ such that
    \begin{align*}
\sup\limits_{x\in C}\|\tilde{F}(x)-F(x)\|<\epsilon
    \end{align*}
    In other words, for infinite width $k$, this representation is universal.
\end{proposition}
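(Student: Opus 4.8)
The plan is to reduce the statement to the classical Weierstrass approximation theorem. The crucial structural observation is that, since $H_i(x)$ is required to be independent of $x_i$, the entire dependence of $F_i$ on the $i$-th coordinate must be carried by $P$, which (evaluated at the $i$-th coordinate, matching the role of the per-coordinate token projector) supplies exactly the monomials $1, x_i, x_i^2, x_i^3, \ldots$. Consequently, writing $F_i(x) = P(x_i)^\top H_i(x)$ is nothing other than expressing $F_i$ as a polynomial in $x_i$ whose coefficients are functions of the remaining coordinates $x_{-i} = (x_1,\ldots,x_{i-1},x_{i+1},\ldots,x_d)$. Approximating an arbitrary continuous $\tilde{F}_i$ in this form is therefore equivalent to a parametric polynomial approximation in the single variable $x_i$ with coefficients that vary continuously in $x_{-i}$.

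First I would fix a coordinate $i$ and apply the multivariate Stone--Weierstrass theorem to the continuous component $\tilde{F}_i : C \to \R$ on the compact set $C$: for any $\delta > 0$ there is a polynomial $p^{(i)} \in \R[x_1,\ldots,x_d]$ with $\sup_{x \in C} |\tilde{F}_i(x) - p^{(i)}(x)| < \delta$. Next I would regroup this polynomial by powers of the $i$-th variable, writing $p^{(i)}(x) = \sum_{j=0}^{N_i} x_i^{\,j}\, q_j^{(i)}(x_{-i})$, where each coefficient $q_j^{(i)}$ is itself a polynomial in the remaining variables and hence continuous and, by construction, independent of $x_i$.

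It then remains to assemble these pieces into a single locally invariant head of common width. I would set $k = 1 + \max_i N_i$ and define $H_i(x) \in \R^k$ to have $j$-th entry equal to $q_j^{(i)}(x_{-i})$ for $0 \le j \le N_i$ and zero otherwise; padding with zeros makes all the $H_i$ share the output dimension $k$. By construction $H = (H_1,\ldots,H_d)$ is locally invariant and $F_i(x) = P(x_i)^\top H_i(x) = p^{(i)}(x)$, so that $\sup_{x\in C} |\tilde{F}_i(x) - F_i(x)| < \delta$ for every $i$. Combining the $d$ components in the Euclidean norm gives $\sup_{x \in C}\|\tilde{F}(x) - F(x)\| \le \sqrt{d}\,\delta$, and choosing $\delta = \epsilon/\sqrt{d}$ yields the claim.

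I do not expect a genuine obstacle here: the mathematical content is entirely contained in Weierstrass approximation together with the elementary regrouping of a polynomial by powers of one variable, whose coefficients are automatically continuous. The only points requiring care are bookkeeping ones --- arranging a single finite width $k$ valid for all coordinates simultaneously (handled by taking the maximum degree) and passing from the per-component sup-norm estimates to the vector-valued bound (handled by the $\sqrt{d}$ factor). I would also remark that the \emph{infinite width} of $P$ is used only to guarantee that arbitrarily high monomial degrees are available, while any fixed approximation selects only the finitely many initial entries of $P$ picked out by the nonzero components of $H$.
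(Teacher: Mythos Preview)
Your proposal is correct and follows essentially the same approach as the paper: approximate each coordinate $\tilde{F}_i$ by a multivariate polynomial via Stone--Weierstrass, then regroup that polynomial by powers of $x_i$ so that the coefficient functions, which depend only on $x_{-i}$, constitute the locally invariant head $H_i$. Your treatment is in fact slightly tidier than the paper's in that you explicitly pad to a common width $k$ and use the sharper $\sqrt{d}$ factor for the vector norm, whereas the paper takes $\delta=\epsilon/d$ and bounds the Euclidean norm by the $\ell^1$ sum.
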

\begin{proof}
Further, let $\tilde{F}:C\to\mathbb{R}^d$ be a continuous function defined on a compact set $C$. Further, write $\tilde{F}=(\tilde{F}_1,\cdots,\tilde{F}_d)$ for the coordinate functions. Then fix a $j$. By the Stone-Weierstrass theorem, there is a polynomial $p^j(x_1,\dots,x_d)$ of the form
\begin{align*}
    p^j(x_1,\dots,x_d) = \sum\limits_{k_1,\dots,k_d\geq 0}a_{k_1,\dots,k_d}^jx_1^{k_1}\dots x_{d}^{k_d}
\end{align*}
where $a_{k_1,\dots,k_d}\neq 0$ only for a finite number of coefficients such that 
\begin{align*}
\sup\limits_{x\in C}|\tilde{F}_j(x)-p^j(x)|<\frac{\epsilon}{d}
\end{align*}
Then we can re-express that polynomial via
\begin{align*}
 p^j(x_1,\dots,x_d) =& \sum\limits_{k_i\geq 0}x_i^{k_i}\left(\sum\limits_{k_i\geq 0, i\neq j }a_{k_1,\dots,k_d}\prod\limits_{j\neq i}x^{k_j}_j\right)\\
    =&\sum\limits_{k_i\geq 0}P_{k_i}(x_i)H_{i,k_i}(x)\\
    =&P(x_i)^TH_i(x)\\
    =:&F_i(x)
\end{align*}
Note that $H=(H_1,\cdots,H_d)$ is locally invariant. Therefore, we can conclude that $F=(F_1,\cdots, F_d)$ fulfills the desired conditions:
\begin{align*}
\sup\limits_{x\in C}\|\tilde{F}(x)-F(x)\|\leq \sum\limits_{i=1}^{d}\sup\limits_{x\in C}|\tilde{F}_i(x)-F_i(x)|<\sum\limits_{i}\frac{\epsilon}{d}=\epsilon
\end{align*}
As $\epsilon>0$ was arbitrary, this finishes the proof.
\end{proof}


\section{Recovering loss functions for CTMC models via RNDs}
We discuss here in more detail how the Radon-Nikodym derivatives (RNDs) presented in \cref{prop:rnd_forward_backward_ctmcs} relate to the construction of loss function for CTMC generative models, also called "discrete diffusion" models. The connection lies in the fact that the loss function of these models relies on RNDs of two CTMCs running both in forward time. We can prove the following statement:
\begin{proposition}
\label{prop:rnd_ctmcs}
Let $\mu,\nu$ be two initial distributions over $S$. Let $Q_t,Q_t'$ be two rate matrices. Then the Radon-Nikodym derivative of the corresponding path distributions in forward time over the interval $[0,t]$ is given by:
\begin{align*}
&\log\frac{\dd\overrightarrow{\P}^{\mu, Q}}{\dd\overrightarrow{\P}^{\nu, Q'}}(\X)\\
=&\log\frac{\dd \mu}{\dd \nu}(X_0)+\int\limits_{0}^{t}Q_s(X_s,X_s)-Q_s'(X_s,X_s)ds\\
&+
\sum\limits_{s,X_{s}^{-}\neq X_{s}}
\log\left(\frac{Q_s(X_{s},X_s^-)}{Q_s'(X_{s},X_s^-)}\right)
\end{align*}
where we sum over all points where $X_s$ jumps in the last term.
\end{proposition}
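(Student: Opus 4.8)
The plan is to mirror the discretization argument used in the proof of \cref{prop:rnd_forward_backward_ctmcs}, but exploiting the crucial simplification that here \emph{both} path measures run forward in time. First I would fix a bounded continuous test function $\Phi:\tspace\to\mathbb{R}$ and discretize $[0,t]$ into $n$ equal steps of width $h=t/n$, writing the finite-dimensional marginal of $\overrightarrow{\P}^{\mu,Q}$ on the grid points $X_0,X_h,\dots,X_t$ as a product of the initial weight $\mu(X_0)$ and the one-step forward transition kernels. The change-of-measure factor relating $\overrightarrow{\P}^{\mu,Q}$ to $\overrightarrow{\P}^{\nu,Q'}$ then factorizes as $\frac{\mu(X_0)}{\nu(X_0)}$ times a product of ratios of one-step transition probabilities. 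Note that both processes share the \emph{same} starting point $X_0$, which is precisely why this boundary term is $\frac{\dd\mu}{\dd\nu}(X_0)$ here rather than the $\frac{\mu(X_0)}{\nu(X_1)}$ appearing in the forward/backward case of \cref{prop:rnd_forward_backward_ctmcs}.

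Next I would evaluate each one-step ratio using the generator equation in its Euler form $\overrightarrow{\P}[X_{s+h}=y\mid X_s=x]=\mathbf{1}_{x=y}+hQ_s(y,x)+o(h)$, splitting the grid into non-jump steps and jump steps. On a non-jump step the ratio is $\frac{1+hQ_s(X_s,X_s)}{1+hQ_s'(X_s,X_s)}$, whose logarithm equals $h\bigl(Q_s(X_s,X_s)-Q_s'(X_s,X_s)\bigr)+O(h^2)$; on a jump step the factors of $h$ cancel and the log-ratio is $\log\frac{Q_s(X_{s+h},X_s)}{Q_s'(X_{s+h},X_s)}$. Summing the non-jump contributions produces a Riemann sum converging to $\int_0^t Q_s(X_s,X_s)-Q_s'(X_s,X_s)\,\dd s$ by continuity of $Q_s,Q_s'$ in $s$, while the jump contributions converge to $\sum_{s,X_s^-\neq X_s}\log\frac{Q_s(X_s,X_s^-)}{Q_s'(X_s,X_s^-)}$ as the grid resolves each jump with $X_{s+h}\to X_s$ and $X_s\to X_s^-$. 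Collecting these limits inside $\E_{\X\sim\overrightarrow{\P}^{\nu,Q'}}[\Phi(\X)\,(\cdots)]$ and using that $\Phi$ was arbitrary identifies the bracketed quantity as the density $\frac{\dd\overrightarrow{\P}^{\mu,Q}}{\dd\overrightarrow{\P}^{\nu,Q'}}$, yielding the claimed formula.

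The main obstacle will be the rigorous passage to the limit $n\to\infty$: one must argue that almost surely only finitely many jumps occur on $[0,t]$ (so the jump sum is finite and, at fine enough resolution, each step contains at most one jump), and that the accumulated $O(h^2)$ error terms across the $\sim n$ non-jump steps sum to $O(t/n)\to 0$. Since this analytic content — the càdlàg-path structure and the dominated-convergence justification for exchanging limit and expectation — is exactly what is handled in the proof of \cref{prop:rnd_forward_backward_ctmcs}, I would invoke those same arguments rather than redo them, emphasizing only the bookkeeping differences caused by both measures being forward in time (notably the $\frac{\dd\mu}{\dd\nu}(X_0)$ boundary term and the orientation of the rate-matrix arguments in the jump logarithm).
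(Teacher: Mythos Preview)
Your proposal is correct and matches the paper's approach exactly: the paper does not spell out a separate proof but simply states that it is ``very similar to the proof of \cref{prop:rnd_forward_backward_ctmcs},'' and your write-up carries out precisely that adaptation, correctly identifying the two bookkeeping changes (the $\frac{\dd\mu}{\dd\nu}(X_0)$ boundary term and the forward-forward orientation of the rate-matrix arguments in the jump ratios).
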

The proof of the above formula is very similar to the proof of \cref{prop:rnd_forward_backward_ctmcs} and an analogous formula also appeared in \citep{campbell2024generative}, for example. The above proposition allows us to by compute the KL-divergence:
\begin{align*}
&D_{KL}(\overrightarrow{\P}^{\mu, Q}_1||\overrightarrow{\P}^{\nu, Q'}_1)\\
\leq &D_{KL}(\overrightarrow{\P}^{\mu, Q}||\overrightarrow{\P}^{\nu, Q'})\\
=&\mathbb{E}_{\X\sim\overrightarrow{\P}^{\mu, Q}}\left[\log\frac{\dd\overrightarrow{\P}^{\mu, Q}}{\dd\overrightarrow{\P}^{\nu, Q'}}(\X)\right]\\
=&D_{KL}(\mu||\nu)+\mathbb{E}_{\X\sim\overrightarrow{\P}^{\mu, Q}}\left[\int\limits_{0}^{1}Q_t(X_t,X_t)-Q_t'(X_t,X_t)dt+
\sum\limits_{t,X_{t}^{-}\neq X_{t}}
\log\left(\frac{Q_t(X_{t},X_t^-)}{Q_t'(X_{t},X_t^-)}\right)\right]\\
=&D_{KL}(\mu||\nu)+\mathbb{E}_{t\sim\text{Unif}_{[0,1]},x_t\sim \overrightarrow{\P}^{\mu, Q}_t}\left[Q_t(X_t,X_t)-Q_t'(X_t,X_t)\right]\\&
+\mathbb{E}_{\X\sim\overrightarrow{\P}^{\mu, Q}}\left[\sum\limits_{t,X_{t}^{-}\neq X_{t}}
\log\left(\frac{Q_t(X_{t},X_t^-)}{Q_t'(X_{t},X_t^-)}\right)\right]\\
=&D_{KL}(\mu||\nu)+\mathbb{E}_{t\sim\text{Unif}_{[0,1]},x_t\sim \overrightarrow{\P}^{\mu, Q}_t}\left[Q_t(X_t,X_t)-Q_t'(X_t,X_t)\right]\\&
+\int\limits_{0}^{1}
\mathbb{E}_{X_t\sim\overrightarrow{\P}^{\mu, Q}_t}\left[\sum\limits_{y\neq X_t}Q_t(y;X_t)\log\left(\frac{Q_t(y;X_t)}{Q_t'(y,X_t)}\right)\right]\dd t\\
=&D_{KL}(\mu||\nu)+\mathbb{E}_{t\sim\text{Unif}_{[0,1]},X_t\sim \overrightarrow{\P}^{\mu, Q}_t}\left[
\sum\limits_{y\neq X_t}Q_t'(y,X_t)-Q_t(y,X_t)+Q_t(y;X_t)\log\left(\frac{Q_t(y;X_t)}{Q_t'(y,X_t)}\right)\right]
\end{align*}
where we have used the data processing inequality in the first term. Having a parameterized model $Q_t'=Q_t^\theta$, this leads to the following loss:
\begin{align*}
L(\theta)=&D_{KL}\left(\overrightarrow{\P}^{\mu, Q}||\overrightarrow{\P}^{\mu, Q_t^\theta}\right)\\
=&D_{KL}(\mu||\nu)+\mathbb{E}_{t\sim\text{Unif}_{[0,1]},X_t\sim \overrightarrow{\P}^{\mu, Q}_t}\left[
\sum\limits_{y\neq X_t}Q_t^\theta(y,X_t)-Q_t(y,X_t)\log\left(Q_t^\theta(y,X_t)\right)\right]+C
\end{align*}
where $Q_t$ is some reference process. The above recovers loss functions in the context of CTMC and jump generative models \citep{campbell2022continuous, gatdiscrete, shaul2024flow, campbell2024generative} and Euclidean jump models \citep[Section D.1.]{holderrieth2024generator}. Note that the above loss cannot be used for the purposes of sampling in a straight-forward manner because we do not have access to samples from the marginals of the ground  reference $\overrightarrow{\P}^{\mu, Q}$.

\section{Numerical experiments}

All code for experiments can be found under the following link: \url{https://github.com/malbergo/leaps/}.

\label{app:exp}


We briefly explain the Ising model here. The Potts model is similar and we refer to details for the Potts model to the DISCS benchmark \citep{goshvadi2023discs}. Configurations of the $L\times L$ Ising lattice follow the target distribution $\rho_1(x) = e^{-\beta H(x) + F_1}$ where $\beta$ is the inverse temperature of the system, $F_1$ the free energy, and $H(x): \{-1,1\}^{L\times L} \rightarrow \mathbb R$ is the Hamiltonian for the model defined as
\begin{align}
    H(x) = - J \sum_{\langle i,j\rangle} x_i x_j + \mu \sum_i x_i.
\end{align}
Here, $J$ is the interaction strength, $\langle i,j\rangle$ denotes summation over nearest neighbors of spins $x_i, x_j$ and $\mu$ is the magnetic moment. Neighboring spins are uncorrelated at high temperature but reach a critical correlation when the temperature drops below a certain threshold. As observables, we use the histograms of the magnetization $M(x) = \sum_i x_i$ compared to ground truth, as well as the two point connected correlation function 
\begin{equation}
G_{\mathrm{conn}}(r)
= \mathbb{E}[x_i x_{i+r}]
- \mathbb{E}[x_i] \;\mathbb{E}[x_{i+r}].
\end{equation}
The latter is a measure of the dependency between spin values at a distance $r$ in lattice separation. 

For evaluation, we use the self-normalized definition of the effective sample size such that, given the log weights $A_t$ associated to $N$ CTMC instances, the ESS at time $t$ in the generation is given by:
\begin{equation}
\label{eq:ess}
\operatorname{ESS}_t=\frac{\left(N^{-1} \sum_{t=1}^N \exp \left(A_t^i\right)\right)^2}{N^{-1} \sum_{i=1}^N \exp \left(2 A_i^i\right)}
\end{equation}

\end{document}